\definecolor{Maroon}{RGB}{204, 102, 0}
\definecolor{rulecolor}{rgb}{0.0, 0.06, 0.54}
\definecolor{tableheadcolor}{rgb}{0.74, 0.83, 0.9}
\definecolor{bluecolor}{rgb}{0.74, 0.83, 0.9}
\newtheorem{theorem}{Theorem}
\newtheorem{lemma}[theorem]{Lemma}
\theoremstyle{remark}
\newtheorem*{remark}{Remark}
\newcommand{\1}[2][]{\ensuremath{\mathds{1}_{#1}\left\{#2\right\}}}
\newcommand{\obs}[1]{\ensuremath{{#1}^\text{obs}}}
\newcommand{\z}{\ensuremath{\boldsymbol{z}}}
\newcommand{\mz}{\ensuremath{\mathbf{z}}}
\renewcommand{\cite}[1]{\citep{#1}}
\newtcolorbox{mybox}[3][]
{
  colframe = #2!15,
  colback  = #2!10,
  coltitle = #2!10!black,  
  title    = {#3},
  boxsep   = 0.25pt,
  left     = 0.5pt,
  right    = 0.5pt,
  top      = 0pt,
  bottom   = 0pt,
  width=\linewidth,
  #1,
}
\title{Mixed Effects Neural ODE:\\A Variational Approximation for Analyzing the Dynamics of Panel Data}
\author[1,2]{Jurijs Nazarovs}
\author[3]{Rudrasis Chakraborty}
\author[2]{Songwong Tasneeyapant}
\author[4]{Sathya N. Ravi$^\ast$}
\author[2]{Vikas Singh\thanks{Corresponding authors: Ravi and Singh.}}
\affil[1]{%
    Department of Statistics, 
    University of Wisconsin Madison
}
\affil[2]{%
    Department of Biostatistics \& Med. Info., 
    University of Wisconsin Madison
}
\affil[3]{
    Amazon Lab 126
}
\affil[4]{
    Department of Computer Science, 
    University of Illinois at Chicago
}
\begin{document}
\maketitle

\begin{abstract}
Panel data involving longitudinal measurements of the same set of participants taken over multiple time points is common in studies to understand childhood development and disease modeling. 
Deep hybrid models that marry the predictive power of neural networks with physical 
simulators such as differential equations, 
are starting to drive 
advances in such applications. The task of modeling 
not just the observations but the hidden dynamics that are captured by the measurements poses interesting statistical/computational questions.  
We propose a probabilistic model called ME-NODE to incorporate (fixed + random) mixed effects for analyzing such panel data.  
We show that our model can be derived using smooth approximations of SDEs provided by the Wong-Zakai theorem. We then derive Evidence Based Lower Bounds for ME-NODE, and develop (efficient) training algorithms using MC based sampling methods and numerical ODE solvers. 
We demonstrate ME-NODE's utility on tasks
spanning the spectrum from simulations and toy data to real longitudinal 3D imaging data from an Alzheimer's disease (AD) study, 
and study its performance in terms of accuracy of reconstruction 
for interpolation,  
uncertainty estimates and personalized prediction.

\end{abstract}

\section{Introduction}

Observational studies in the social and health sciences 
often involve acquiring 
repeated measurements \textbf{over time} for participants/subjects. 
If most participants stay enrolled, 
we can consider each row in the {\em panel} to correspond to 
longitudinal measurements or records of an individual, at regularly or irregularly sampled time points. Modeling 
development or growth trends while accounting for \textit{variability within and across individuals} %
leads to the need for statistical models for analysis of such ``panel data'' \citep{kreindler2006effects, katsev2003hurst}.

The modeling of temporal processes can be set up as a regression task, where a  function (with unknown parameters) that is plausible for the domain  is estimated using the observed longitudinal data samples. 
Apart from splines and tools from functional data analysis, a common alternative is to use differential equations \citep{chen2008efficient, liang2013parameter, fang2011two}, which provides expressive power and many computational tools developed over decades. However, differential equations do not directly account for variability within and across subjects -- hallmark features of panel data. To capture some of these characteristics, the widely used Auto-Regressive Models (ARMA) literature incorporates white noise type functions in differential equation models, leading to various forms of stochastic differential equation (SDE) \citep{brockwell2001continuous, macurdy1982use, hedeker2006longitudinal},  
\begin{equation}
    z_{t}=f_{\mu}(z, t)dt+L_{\Sigma}(z, t) \circ d \beta(t)
    \label{eq:strat}
\end{equation}
where $z_t \in \mathbf{R}$, $f$, $L$ denote the drift and noise sensitivity functions with unknown parameters $\mu$ and $\Sigma$ respectively. 

Suppose we are given a set of (partial) measurements 
$\{z_{t_k}\}$  at certain $K$ time points $\{t_k\}_{k=1}^K$, and an efficient numerical scheme to simulate the SDE in \eqref{eq:strat}. Then, the unknown parameters $\mu$ and $\Sigma$ can be found by simply maximizing the likelihood function $p\left(z_t| \mu;\Sigma\right)$. 
In principle, it is straightforward to extend the model in \eqref{eq:strat}  to high dimensional $z_t$. However, there are two main technical challenges in this setting with stand-alone likelihood based methods: \begin{inparaenum}
[\bfseries (i)]
\item such an approach requires a large number of longitudinal measurements which is often infeasible, especially in the applications that motivate our work \citep{marinescu2018tadpole}; 
\item numerical schemes to simulate nonlinear SDEs in high dimensions are quite involved.
\end{inparaenum}
{\color{black}Indeed, these issues become pronounced when we assume that the observed data is not $z$ but $x$ which are actually measurements governed by a process that reflects the dynamics $z$.
}

The literature provides a principled way, 
called {\bf Bayesian filtering}, 
to tackle the problem described above, see \citep{sarkka2014lecture}, Ch 7. 
Let us consider that the object or measurement $x$ is evolving as non-linear function $D$ from a {\em latent} measure of dynamics/progression $z_t$ \citep{pierson2019inferring, hyun2016stgp, whitaker2017bayesian}:
    $x_t = D(z_t) + \varepsilon_t.$
So, it is natural to think of the observable $x$ of an unknown dynamics $z$ -- which we 
can call the ``latent'' representation.
The goal in Bayesian filtering, which aligns 
nicely with our task, 
is to 
compute,  $p(z|x_{t_1},...,x_{t_K})$.
Interestingly, under some assumptions,
closed form solutions for the posterior distributions $p(z|x_{t_1},...,x_{t_K})$ are available, see Chapter 10 in \citep{sarkka2019applied}.
However, these assumptions are hard to verify in general, and direct utility of these approaches for modern applications is not obvious.  

{\bf Main ideas/contributions.} 
The most important takeaway from the description above 
is not the mechanics of {\em how} Bayesian filtering is carried 
out in practice, rather, {\em what} it seeks to 
estimate. 
If we focus on the key object of interest -- the conditional distribution -- we realize that 
recent works in machine learning do provide a recipe that exploits the universal approximation properties of neural networks to represent fairly 
complex conditional distributions. 
In this case, the parameters are simply trained using off-the-shelf procedures, and DE numerical solvers are required for \eqref{eq:strat}.
While both ODE and SDE solvers are available, SDE is  typically less efficient \citep{li2020scalable, liu2019neural}. Notice that when $L_{\Sigma}\equiv 0$, that is, the observable $x$ follows an ODE and corresponding solvers can be applied, the approach that would instantiate this idea has already been successfully tried in \citep{yildiz2019ode2vae}. However, given the Bayesian filtering motivation above, is there a way to utilize ODE solvers, while preserving stochastic nature of  \eqref{eq:strat}?

Our development begins by rewriting the noise term in \eqref{eq:strat} with a series of basis functions with standard normal coefficients. We show that this modification enables incorporating random effects in our predictions -- which appropriately models the variability of the data, the requirement for successful analysis of panel data. More importantly, using our approach, we show that for a special class of latent SDE based models, which we will refer as Mixed Effects Neural ODE (ME-NODE), the parameters of the underlying neural network can be trained efficiently {\em without}  backpropagating through any SDE solvers.  %
To achieve this, we derive the Evidence Lower Bound loss -- where widely available libraries for numerical ODE solvers -- are sufficient and directly applicable. We show applications to brain imaging 
where our formulation can provide personalized prediction 
together with uncertainty, a feature of Bayesian methods.

\section{Background/notation}\label{sec:background}
\begin{figure*}[t]
\centering
\includegraphics[width=\textwidth]{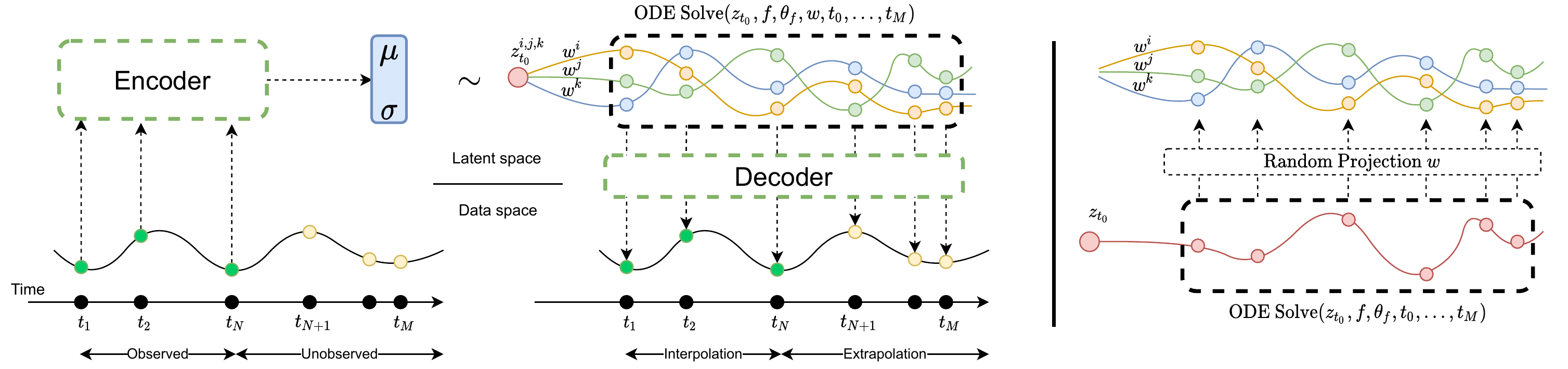} 
\caption{\footnotesize Structure of the model. First, encoder is applied to temporal data of an subject to generate initial point of the trajectory in latent space; Second, ME ODE solver is used to generate trajectory from the specified initial point; Last, decoder is used to map latent space ODE stages into observed values. On the right side we show how ME ODE can be viewed as random projection of trajectory of standard ODE, where trajectory is defined by random effect $\mathbf{w}$. 
\label{fig:model}}
\end{figure*}

In this section we present the notations and some basic concepts we use in the paper.
\paragraph{Notation.} 
For a time-varying vector    $\z=(z_0,\ldots, z_n)$, (with $n$ time points)  we denote a vector without the $j$-th component as a
$\z_{-j}=(z_0, \ldots,z_{j-1}, \quad z_{j+1}, \ldots z_n)$.
Often each time point $z_t$ is vector valued, and in the rest of the paper we denote it by $\mathbf{z}_t \in \mathbf{R}^p$, where $p$ is the number of variables. Thus, to denote a time-varying vector where each time point represents a $p$ dimensional vector we use $\z=(\mathbf{z}_0,\ldots, \mathbf{z}_n)$, where each $\left\{\mathbf{z}_t\right\}_{t=1}^n \subset  \mathbf{R}^p$. 
We denote the indicator function as \1[\mathbf{x}]{\mathbf{y}}:  it is $1$ if $\mathbf{y}=\mathbf{x}$, and $0$ otherwise.
We refer to a general form of ODE as $\dot{\mz}_t=h(\mz_t)$, where $h(\mz_t)$ defines a  ``trajectory'' and depends on the current value of the process at time $t$.
Without loss of generality, in this section we assume that the DE is defined on $\mathbf{R}$.

\paragraph{Smooth Approximations of SDE.} Consider the standard form of the Stratonovich SDE given by \eqref{eq:strat}. By the Wong and Zakai theorem \citep{hairer2015wong}, the solution to \eqref{eq:strat} can be approximated asymptotically ($N \rightarrow \infty$) by the solution of the following equation:
\begin{equation}
    \dot{z}_{t}=f(z, t)+L(z, t) \sum_{n=1}^{N} b_{n} \varphi_{n}(t),
\end{equation}
where $b_n \stackrel{i.i.d.}{\sim} \mathcal{N}(0, 1)$ and $\{\varphi_n\}$ are a suitable set of basis functions.
Based on mild simplifying assumptions, %
we get
\begin{equation}
    \dot{z}_{t}=f(z, t)+g(z, t) b
    \label{eq:strat_approx}
\end{equation} 
is an approximation of Stratonovich's SDE in Equation \eqref{eq:strat}, where $b\sim\mathcal{N}(0, 1)$.

\begin{proof}
Assume that $\exists g(z, t)<\infty$ and $\forall n$, $\exists$  $\sigma_n(z, t)<\infty$ such that
\begin{equation}
    \varphi_{n}(t) L(z, t)=\frac{\sigma_n(z, t)}{\sqrt{\sum_{n=1}^{N} \sigma_{n}^{2}(z, t)}} g(z, t).\quad \mbox{Then,}
\end{equation}
$$\sum_{n=1}^N \varphi_{n}(t) L(x, t)=g(z, t) \frac{1}{\sqrt{\sum_{n}^{N} \sigma_n^{2}}} \sum_{n=1}^{N} \xi_{n}(z, t),$$ where $\xi_n(z, t) = b_n\sigma_n(z, t) \sim \mathcal{N}(0,\sigma_n(z, t))$.
If for some $\delta>0$, 
\begin{equation}
    \frac{1}{\left(\sqrt{\sum_{n=1}^N \sigma_{n}^{2}}\right)^{2+\delta}} \sum_{n=1}^{N} \mathbb{E}\left(\left|\xi_{n}-\mu_{n}\right|^{2+\delta}\right)  \stackrel{N \rightarrow \infty}{\rightarrow} 0,
\end{equation}
then according to Lyapunov's Central Limit Theorem 
$\frac{1}{\sqrt{\sum_{n}^{N} \sigma_n^{2}}} \sum_{n=1}^{N} \xi_{n}(z, t) \rightarrow \mathcal{N}(0, 1)$. 
Then,
$\dot{z}_{t}=f(z, t)+g(z, t) b$ is an approximation of Stratonovich's SDE \eqref{eq:strat},
where $b\sim\mathcal{N}(0, 1)$.
\end{proof}

This means that under standard second moment conditions on the stochastic part, the  solution of \eqref{eq:strat_approx} can be seen as an approximation of the solution to Stratonovich's SDE in Equation \eqref{eq:strat}.
The benefits of this simplified form are two-fold: \begin{inparaenum}[\bfseries (a)] 
\item since $z$ is a random variable, we can incorporate uncertainty within the ODE similar to a SDE,  
\item for a given $z$,  the trajectory in  \eqref{eq:strat_approx} can be modeled using an ODE and the associated computational benefits become 
available. 
\end{inparaenum}
Concurrently, \citep{hodgkinson2020stochastic} showed that it is possible to generalize this result to a more general class of SDEs using tools from the theory of rough paths.
We must note (discussed briefly later), that for specific choice of $f$ and $g$, the RHS of Equation \eqref{eq:strat_approx} is well known in statistics and machine learning as a \textbf{Mixed Effects model} \citep{hyun2016stgp}. Therefore, 
we will refer to our model as a mixed effects model and 
use the relevant terminology from this literature whenever possible. We hope that this will make the presentation more accessible and clarify that our scheme is {\em not a  general purpose replacement to a}\ deep neural network based SDE solver.

\paragraph{Mixed effects model.} 
Assuming individuals/groups are denoted by $i$, a nonlinear mixed effects (ME) model \citep{demidenko2013mixed} can be written as:
\begin{equation}
\begin{array}{l}
\boldsymbol{\phi}^{i}=\nu\left(X^{i} \boldsymbol{\beta}+U^{i} \mathbf{b}^{i}\right)+\boldsymbol{\epsilon}^{i},
\end{array}
\label{eq:nonlinear_me}
\end{equation}
where $X^i \in \mathbf{R}^{n\times m}$ is a matrix of covariates where $n$ and $m$ are the number of observations and variables respectively. Here, $\nu$ is a non-linear (vector-valued) function,
$\boldsymbol{\beta}\in \mathbf{R}^m$ is a vector of {\em fixed} effects,
$\mathbf{b}^{i} \sim \mathcal{N}\left(\mathbf{0}, \Sigma_{b}\right)$ is a vector of {\em random} effects, 
$U^{i}$ is a design matrix (modeling choice) for random effects, $\boldsymbol{\phi}^{i} \in \mathbf{R}^n$ is the  response variable and 
$\boldsymbol{\epsilon}^{i} \sim \mathcal{N}\left(\mathbf{0}, \Sigma_{\epsilon^{i}}\right)$ represents a noise term.

{\em Task.} Our goal is to learn a latent representation of a time-varying physical process/dynamics $z$, and distribution $p(z|x_1, \ldots, x_{t_k})$. Based on the Bayesian filtering discussion above, we  focus on variational approximation techniques. 

\paragraph{Learning  latent representations with a VAE.}
Variational auto-encoders (VAE) \citep{kingma2013auto} enable learning a probability distribution on a latent space. Then, we can draw samples in the latent space -- and the decoder can generate samples in the space of observations. In practice, the parameters of the latent distribution are learned by maximizing the \textit{evidence lower bound} (ELBO) of the intractable likelihood:
\begin{equation}\label{eq:elbo}
    \log p\left(\mathbf{x}\right) \geq 
    -KL\left(q(\mathbf{z}) \| p(\mathbf{z})\right)+\mathbb{E}_{q(\mathbf{z})}\left[\log p\left(\mathbf{x}\mid \mathbf{z}\right)\right]
\end{equation}
where $\mathbf{z}$ is a sample in the latent space from the approximate posterior distribution $q(\mathbf{z})$, with a prior $p\left(\mathbf{z}\right)$, and $\mathbf{x}$ is a reconstruction of a sample (e.g., an image) with the likelihood $p\left(\mathbf{x} \mid \mathbf{z}\right)$. A common choice for $q$ is $\mathcal{N}(\boldsymbol{\mu}, \Sigma)$, where $\boldsymbol{\mu}$ and $\Sigma$ are trainable parameters \citep{kingma2013auto}.
\section{Mixed Effects Neural ODE}

Given a latent representation $\z$ of a  time-varying process with $\z=(\mathbf{z}_0,\ldots, \mathbf{z}_n)$ and $\mathbf{z}_t \in \mathbf{R}^p$, we now model the latent representation $\z$ as a mixed effects neural ODE. We will assume each of the $p$ variables to be independent, so we will seek to learn $p$ mixed effects models. Without any loss of generality, below we assume $z\in \mathbf{R}$ to denote the latent representation of the time-varying process at a time point $t$.

\paragraph{Modeling random effect $\mathbf{b}^i$ in a network $\Gamma$.}
Mixed effects in the context of ODE is a well studied topic in longitudinal data analysis literature \citep{wang2014estimating, liang2013parameter}. 
Formally, for subject $i$, given $z^i$, $\mathbf{b}^i$, and (population level) fixed effects $\boldsymbol{\beta}$, we assume that there exist a smooth function $h$  such that, 
\begin{equation}
\dot{z}^{i}=h(z^{i}, \boldsymbol{\beta}, \mathbf{b}^{i}).
\label{eq:z_h}
\end{equation}
Due to the universal approximation properties of neural networks  \citep{zhou2020universality}, 
such models are a sensible choice to express the nonlinear function $h$. 
Recall the non-linear mixed effects model from \eqref{eq:nonlinear_me}, and let us model $$h(z^{i}, \boldsymbol{\beta}, \mathbf{b}^{i})=\nu\left(\eta(z^{i}) \boldsymbol{\beta}+U^{i} \mathbf{b}^{i}\right).$$ Here, $\eta: \mathbf{R}^n \rightarrow \mathbf{R}^{n\times m}$ is a non-linear function, with $n$ and $m$ being the number of observations and variables respectively and $\boldsymbol{\beta}\in \mathbf{R}^m$, $U^i\in \mathbf{R}^{n\times m}$. With a choice of $U^i = \eta(z^i)$, %
we can model $h(z^{i}, \boldsymbol{\beta}, \mathbf{b}^{i})=\Gamma\left(z^{i}\right) \left(\boldsymbol{\beta} +  \mathbf{b}^{i}\right)$, where $\Gamma$ is a neural network with $\Gamma\left(z^{i}\right)\in \mathbf{R}^{n\times m}$. %
Now, we can derive the expressions for representing mixed effects in ODE, parameterized by a neural network as 
\begin{equation}
h(z_t^{i}, \boldsymbol{\beta}, \mathbf{b}^{i})= \Gamma\left(z_t^{i}\right)\mathbf{w}^i, 
\label{eq:lat_me}
\end{equation}
where $\mathbf{w}^i\sim \mathcal{N}(\boldsymbol{\beta}, \Sigma_{b})$ is a mixed effect for subject $i$. This can be thought of as a projection from $\mathbf{R}^m$ to $\mathbf{R}$ along the direction given by $\mathbf{w}^i\in \mathbf{R}^m$.

\begin{remark}
Observe the difference between standard SDE in \eqref{eq:strat} -- where the noise is added at each step $t$ -- and our formulation,  where $\mathbf{w}^{i}$ (or $\mathbf{b}^i$) is sampled once for subject $i$ and completely defines the trajectory through $h(z^{i}, \boldsymbol{\beta}, \mathbf{b}^{i})$ for all steps of time $t$. This is {\em crucial} from the computational perspective: with this strategy, we can simply apply existing ODE solvers whereas backpropagating through a blackbox SDE requires specialized solutions, which are typically slower \citep{li2020scalable, liu2019neural}.
\end{remark}

\paragraph{Initializing ODE $h(z^{i}, \boldsymbol{\beta}, \mathbf{b}^{i})$ with an encoder $E$.}
Often in real-world analysis tasks involving panel data, the initial point of the process $z_0$ is not observed. While it can be learned as a parameter \citep{huang2008modeling}, it is desirable to also provide uncertainty pertaining to the learned $z_0$. For this reason, we learn the distribution $q(z_0)=\mathcal{N}(\mu, \sigma)$, by training an encoder $E$ to map observed data $\boldsymbol{x}=(\mathbf{x}_1,\ldots,\mathbf{x}_n)$ (at all $n$ time points)  to parameters of $q(z_0)$, $\mu$ and $\sigma$:
\begin{equation}
    (\mu, \sigma) = E(\boldsymbol{x}) %
    \label{eq:enc}
\end{equation}
We use $q(z_0)$ to sample initial points of ODE $z_0$. 

\paragraph{Mapping $z$ to $x$ via decoder $D$.}
Given the latent representation $z$ and the non-linear function $D$, which can recover the output $\mathbf{x}^i_t$, for a subject $i$ at time point $t$, we can model the output of the dynamic process (e.g., in our application, a brain image) $\mathbf{x}^i_t$ as a non-linear transformation of the latent measure of progression $z^i_t$:
\begin{equation}
\mathbf{x}_t^{i} = D(z_t^{i}) + \boldsymbol{\epsilon}_t,
\label{eq:dec}
\end{equation}
where $\boldsymbol{\epsilon}_t$ is measurement error at each time point.
This idea has been variously used in the literature \citep{pierson2019inferring, hyun2016stgp, whitaker2017bayesian}.

\paragraph{The final model.}
Combining Equations  \eqref{eq:enc}, and \eqref{eq:dec} we obtain our Mixed Effects Neural ODE model in  \eqref{eq:me_ode} and illustrated in Figure~\ref{fig:model}. 

\begin{equation}
\left[
\begin{array}{l}
z_{0}^{i} \sim \mathcal{N}(\mu, \sigma)\text{, where } \mu, \sigma = E(\boldsymbol{x}^i)\\
\mathbf{w}^{i} = \boldsymbol{\beta} + b^{i} \sim \mathcal{N}(\boldsymbol{\beta}, \Sigma_{b})\\
\dot{z}^i_t=\Gamma\left(z^{i}_t\right)\mathbf{w}^{i}\\
\mathbf{x}^i_t = D(z^i_t) + \boldsymbol{\epsilon}_t
\end{array}
\right.
\label{eq:me_ode}
\end{equation}

{\em Synopsis.} Here, for subject $i$, we use the encoder $E$ to map the observed data $\boldsymbol{x^i}$ to parameters of the distribution of ODE initialization $z_0^i$. Then, we parameterize the derivative of ODE $\dot{z}_t$ with a neural network $\Gamma(z^i_t)$ and mixed effects $\mathbf{w}^i$, and use $D$ to map solution of ODE to the original space.

{\bf Structure of the latent space.} When the latent space $z$ can be embedded in a low dimensional space, it is natural to ask whether simulating the ODE $\dot{z}$ in \eqref{eq:me_ode} can be accomplished efficiently.
The following result shows a link between our model in \eqref{eq:me_ode} and random projection ideas \citep{vempala2005random}. 
Moreover, in contrast to Neural ODE \citep{chen2018neural}, random projections allow using high dimensional representations,  $\Gamma(z^i_t)$, and mapping it back to $\mathbf{R}$ using random projections.  
This provides expressive power but also approximately preserves the distance (using JL lemma). 

\begin{lemma}[Random projection]
With a certain choice of $h(z_t)$ in the ODE formulation and given a mixed effect $\mathbf{w}$ with a choice of approximate posterior $q(\mathbf{w})$ as a Normal distribution, the solution to ME Neural ODE (Equation \eqref{eq:me_ode}) is a random projection of a solution to Neural ODE  \citep{chen2018neural}, Figure~\ref{fig:model} (right part). %
\label{lemma:randproj}
\end{lemma}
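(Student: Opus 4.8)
The plan is to prove the lemma by exhibiting an explicit \emph{lift}: I will construct a Neural ODE living in the high-dimensional space $\mathbf{R}^m$ together with a fixed (but random) linear projection $\mathbf{R}^m \to \mathbf{R}$, and then verify by a one-line chain-rule computation that projecting the lifted trajectory reproduces exactly the ME-NODE dynamics $\dot z_t = \Gamma(z_t)\mathbf{w}$. Concretely, the lemma reduces to three ingredients: (i) choosing the ``certain $h$'' so that the lifted vector field depends on the lifted state only through its projection; (ii) differentiating the projected trajectory and matching it to Equation~\eqref{eq:me_ode}; and (iii) invoking uniqueness of solutions of the initial value problem so that the projected trajectory \emph{is} the ME-NODE solution rather than merely \emph{a} solution.

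First I would fix a draw $\mathbf{w}\sim\mathcal{N}(\boldsymbol{\beta},\Sigma_b)$ and recall that in the scalar case $\Gamma(z_t)\in\mathbf{R}^{1\times m}$ is a row, so $\dot z_t=\Gamma(z_t)\mathbf{w}$ is a genuine inner-product projection. Writing $\tilde\Gamma(\cdot):=\Gamma(\cdot)^{\top}\colon \mathbf{R}\to\mathbf{R}^m$, I define the (vector-valued) Neural ODE
\begin{equation}
\dot{\mathbf{Z}}_t=\tilde\Gamma\!\left(\mathbf{w}^{\top}\mathbf{Z}_t\right),\qquad \mathbf{Z}_0 \text{ chosen so that } \mathbf{w}^{\top}\mathbf{Z}_0=z_0,
\label{eq:lifted}
\end{equation}
and the random projection $P_{\mathbf{w}}(\mathbf{Z})=\mathbf{w}^{\top}\mathbf{Z}$. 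Setting $z_t:=P_{\mathbf{w}}(\mathbf{Z}_t)$ and differentiating gives $\dot z_t=\mathbf{w}^{\top}\dot{\mathbf{Z}}_t=\mathbf{w}^{\top}\tilde\Gamma(\mathbf{w}^{\top}\mathbf{Z}_t)=\Gamma(z_t)\mathbf{w}$, which is precisely the third line of Equation~\eqref{eq:me_ode}, while the boundary condition matches $z_0$ by construction. Assuming $\Gamma$ is Lipschitz (so $\tilde\Gamma$ is too), Picard--Lindel\"of guarantees that both $\mathbf{Z}_t$ and the scalar IVP have unique solutions, so $z_t=\mathbf{w}^{\top}\mathbf{Z}_t$ is \emph{the} ME-NODE trajectory. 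Since $\mathbf{w}$ is a Gaussian direction, $P_{\mathbf{w}}$ is exactly a (one-dimensional) random projection in the sense of \citep{vempala2005random}, establishing the claim; the distance-preservation property follows by applying the JL lemma to $P_{\mathbf{w}}$, giving the ``expressive power while approximately preserving distances'' remark.

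The main obstacle is the self-consistency, or closure, of the lift in \eqref{eq:lifted}: the right-hand side is allowed to see the lifted state $\mathbf{Z}_t$ only through the scalar $\mathbf{w}^{\top}\mathbf{Z}_t$, which is what lets the projected flow close on itself and equal the argument of $\tilde\Gamma$. This is exactly the content of ``a certain choice of $h(z_t)$,'' and it is the step that must be stated carefully, because a general $\Gamma(z_t)$ depending on the full high-dimensional state would not descend to a self-contained ODE for $z_t$. I would therefore be explicit that the lifted vector field is permitted to depend on the (random) $\mathbf{w}$ through its scalar argument; if one insists on a single, $\mathbf{w}$-independent Neural ODE trajectory that is then projected, the construction specializes to the affine case $\Gamma(z_t)\equiv A$, where $z_t=\mathbf{w}^{\top}\mathbf{Z}_0+t\,A\mathbf{w}$ is literally the random projection of the straight-line trajectory $\mathbf{Z}_t=\mathbf{Z}_0+t\,A^{\top}$. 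Flagging this distinction is what separates the clean ``project a fixed trajectory'' reading from the more general projection-consistent reading, and I expect it to be the only genuinely delicate point in the argument.
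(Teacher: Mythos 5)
Your proof is correct (up to the closure caveat you yourself flag), but it runs in the opposite direction from the paper's and is a genuinely different construction. The paper starts from a Neural ODE $\dot z_t = f(z_t)$ living in the \emph{same} scalar latent space, defines the candidate ME-NODE solution by scaling, $\widetilde z_t = z_t \cdot w$ (multiplication by the random effect plays the role of the projection), and applies the same one-line chain rule you use: $d\widetilde z_t = w\, f(z_t)\, dt = f\left(\widetilde z_t / w\right) w\, dt$, concluding that $\widetilde z_t$ solves the ME-NODE \emph{provided} $\Gamma(\cdot) = f(\cdot / w)$; that compatibility equation is exactly what the lemma's phrase ``a certain choice of $h(z_t)$'' refers to. You instead keep the given network $\Gamma$ and build a lifted Neural ODE in $\mathbf{R}^m$, $\dot{\mathbf{Z}}_t = \Gamma(\mathbf{w}^{\top}\mathbf{Z}_t)^{\top}$, projecting by the inner product $\mathbf{w}^{\top}$. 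The two arguments trade the same difficulty back and forth: in the paper the base field $f$ is $w$-free but the induced ME-NODE field $f(\cdot/w)$ depends on $w$; in yours $\Gamma$ is the given, $w$-free network but the lifted field sees $\mathbf{w}$ through its scalar argument --- precisely the closure condition you isolate. Your route buys three things the paper's does not: the projection is a genuine JL-style inner product with a Gaussian direction in $\mathbf{R}^m$, which is what the surrounding text (citing \citep{vempala2005random} and the JL lemma) actually advertises, whereas the paper's proof implicitly treats $w$ as a scalar it can divide by; it works for an arbitrary Lipschitz $\Gamma$ rather than only for fields of the compatible form $f(\cdot/w)$; and it pins down that the projected curve \emph{is} the ME-NODE solution via Picard--Lindel\"of uniqueness, a step the paper omits entirely. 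What the paper's version buys in exchange is brevity and a picture that matches the right panel of Figure~\ref{fig:model}: the ME-NODE trajectory is exhibited directly as a $w$-scaled copy of a single base trajectory. (Incidentally, the paper's final display contains a typo: ``$\Gamma(\widetilde{z_t}) \cdot w\, db$'' should read ``$\Gamma(\widetilde{z_t}) \cdot w\, dt$''.)
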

\begin{proof}
Let us use the following notations,
\begin{compactenum} [\bfseries (a)]
\item $d z_{t} =f\left(z_{t}\right) d t$ defines trajectory of Neural ODE setup;
\item $d \widetilde{z_{t}}=\Gamma\left(\widetilde{z_{t}}\right) \cdot w d t$ defines trajectory of our ME setup;
\item $\widetilde{z_{t}}= z_t \cdot w$: random projection of  $z_t$.
Then we have,
\end{compactenum}
\begin{align*}
d \widetilde{z}_{t} =w \cdot dz_t
= w \cdot f\left(z_{t}\right) d t
&=f\left(\frac{\widetilde{z_{t}}}{w}\right) \cdot w d t\\* 
&=  \Gamma(\widetilde{z_t}) \cdot w db
\end{align*}
If $\Gamma\left(\widetilde{z}_{t}\right)=f\left(\frac{\widetilde{z}_{t}}{w}\right)$, then $\widetilde{z_{t}}$ is random projection of $z_t$ and $\widetilde{z_{t}} = z_t \cdot w$.
\end{proof}

While have a model, 
efficient training is still unresolved. Next, we show how for \eqref{eq:me_ode}, we can derive ELBO-like bounds using Approximate Bayesian Computation (ABC) \citep{wilkinson2013approximate, fearnhead2010semi}.

\paragraph{Connection with approximation of Stratonovich's SDE.}

Observe that if we select $f$ and $g$ in  \eqref{eq:strat_approx} as
$f(z, t)=\Gamma(z, t) \beta$
and
$g(z, t)=\Gamma(z, t) \Sigma_b^{1/2}$, the approximation of Stratonovich's SDE becomes ME-ODE defined in \eqref{eq:lat_me}, i.e., we set $\dot{z}_{t}=\Gamma(z, t)\mathbf{w}$,
 where $\mathbf{w} = \beta + b \Sigma_{b}^{1/2} \sim \mathcal N(\beta, \Sigma_{b})$.
 
{ Note that our ODE/SDE based derivation of the expression in \eqref{eq:lat_me} coincides with the mixed effects form proposed in \cite{xiong2019training} for single panel (time) data.}
 
\begin{remark} 
Recall that in \eqref{eq:z_h}, we assumed that $h$ is  smooth. In theory, the smoothness assumption is justified due to Wong-Zakai approximation, see  \eqref{eq:strat_approx}. In practice, this can be achieved by choosing a sufficiently fine discretization. %
\end{remark}

\subsection{Model training: ME-NODE ELBO} %

The objective of the training scheme we describe now is to learn \begin{inparaenum}[\bfseries (a)] \item distribution of $z_0$, \item fixed effect $\boldsymbol{\beta}$, \item variance of random effect $\Sigma_b$ \end{inparaenum}. 
To reduce clutter, in this section, we drop index $i$ (which specifies a  subject).

At a high level, our approach is to infer the random effects $\mathbf{b}$ by learning it as a parameter. For our purposes, learning $\mathbf{b}$ corresponds to ensuring that $\mathbf{b}$ satisfies the following key requirement (accounting for a small reconstruction error): $\mathbf{b}$ needs to be random by design for statistical reasons such as uncertainty quantification. 
Using our model in \eqref{eq:me_ode}, it is easy to see that this requirement is satisfied because  $\mathbf{b}$ is sampled from $\mathcal{N}(\mathbf{0}, \Sigma_b)$. 
A common strategy to satisfy the requirement is to use a VAE \citep{chen2016variational}. It is known that in such probabilistic models computing the marginal likelihood $p(x)$ is usually intractable. Let $p(\z, \mathbf{w})$ be the prior joint distribution, $q(\z, \mathbf{w})$ as approximate joint posterior, and $p(x\mid \z, \mathbf{w})$ as likelihood of reconstruction. Using concepts from \S\ref{sec:background}, we can derive a lower bound for the $p(x)$ of our ME-NODE model as:

\vspace*{-2em}
\begin{align}
\log p(x)&=%
\log \int p(x | \z, \mathbf{w}) p(\z, \mathbf{w}) \frac{q(\z, \mathbf{w})}{q(\z, \mathbf{w})} d(\z, \mathbf{w})\label{eq:marg_prop}\\
&=\log \mathbb{E}_{q(\z, \mathbf{w})}\left(p(x \mid \z, \mathbf{w}) \cdot \frac{p(\z, \mathbf{w})}{q(\z, \mathbf{w})}\right)\nonumber\\
\geq \mathbb{E}_{q(\z, \mathbf{w})}&\log p\left(x | \z, \mathbf{w}\right)-KL(q(\z, \mathbf{w}) \| p(\z, \mathbf{w})),\label{eq:elbo_me_prelim}
\end{align}
where  \eqref{eq:marg_prop} follows from the marginalization property and then we use Jensen's inequality.
Next, we define $q(\z, \mathbf{w})$ which is needed to compute ELBO.
Note that 
in the following description, $\xi|\psi$ refers to the random variable $\xi$ conditioned on a value of $\psi$, regardless of what the value is, i.e., it can be $\xi|\psi=0$ or $\xi|\psi=1$.

\paragraph{Defining $q(\z, \mathbf{w})$.}
Assuming that $z_0$ and $\mathbf{w}$ are independent random variables, we get 
$$
q(\z, \mathbf{w})=q\left(z_{0}, \z_{-0}, \mathbf{w}\right)=q\left(\z_{-0} \mid z_{0}, \mathbf{w}\right) q(z_0)q(\mathbf{w}).
$$
Recall that $\z_{-0}=(z_1,\ldots,z_n)$ is a vector of ODE solutions at time step $t$, except $t=0$. 
At every step $t$, $z_t$ is a random variable, which is a function of $z_0$ and $\mathbf{w}$. 
However, with a fixed initial point $z_0$ and mixed effect $\mathbf{w}$, the progression follows a {\em defined} trajectory (i.e., there is no randomness). It means that $z_t|z_0, \mathbf{w}$ is deterministic and hence the distribution  $q(z_{t}|z_0, \mathbf{w})$ is degenerate \citep{danielsson1994stochastic}, which results in
$q(\z_{-0}|z_0, \mathbf{w}) = \mathds{1}_{\obs{\z}_{-0}}\{\z_{-0} | z_{0}, \mathbf{w}\}$. Thus,
\begin{equation}
q(\z, \mathbf{w})=\mathds{1}_{\obs{\z}_{-0}}\{\z_{-0} | z_0, \mathbf{w}\} q\left(z_{0}\right) q(\mathbf{w})
\label{eq:q_par}
\end{equation}

{\bf Note.} While the derivation from \eqref{eq:marg_prop} to \eqref{eq:elbo_me_prelim} is well defined for point mass distributions stated in \eqref{eq:q_par}, the use of `KL',
although consistent with  \cite{bai2020efficient} 
(pp 3, (4)--(6)) 
is not ideal (because $\log(0)$ and thus $KL$ is undefined). We 
will avoid using $KL$ notation in the loss in \eqref{eq:elbo_me}.

\paragraph{MC approximation of $E_{q(\z, \mathbf{w})}g(\z, \mathbf{w})$.}
The key in computing  \eqref{eq:elbo_me_prelim} is to estimate $E_{q(\z, \mathbf{w})}g(\z, \mathbf{w})$ for a given function $g(\z, \mathbf{w})$.
Based on parameterization of $q(z, \mathbf{w})$ in \eqref{eq:q_par},  
\begin{equation*}
\begin{split}
    E_{q}&g(\z, \mathbf{w}) =\\ &\int_{z_0, \mathbf{w}}g(\z, \mathbf{w}) \cdot
    \mathds{1}_{\obs{\z}_{-0}}\{\z_{-0} | z_{0}, \mathbf{w}\}q\left(z_{0}\right) q(\mathbf{w}) dz_0 d\mathbf{w}.
\end{split}
\end{equation*}

While the integration may  be intractable, it can be estimated by Monte Carlo (MC) techniques. Sampling ($z_0^m$, $\mathbf{w}^m$) from q, we compute $\frac{1}{M} \sum_{m=1}^M g^*(\z^m, \mathbf{w}^m)$, where $$g^*(\z^m, \mathbf{w}^m) = g(\z^m, \mathbf{w}^m)\cdot \1[\obs{\z}_{-0}]{\z_{-0} | z_{0}^m, \mathbf{w}^m}.$$ This type of sampling is called likelihood-free rejection sampling \citep{del2012adaptive}: we reject all samples ($z_0^m$ and $\mathbf{w}^m$), which do not generate observed  $\obs{\z}_{-0}$.

\paragraph{The final loss.}
Given this MC approximation (with $M$ samples), with the approximate posterior $q(\z, \mathbf{w})$ defined in \eqref{eq:q_par} and with a similarly defined prior
$p(\z, \mathbf{w}) = \mathds{1}_{\obs{\z}_{-0}}\{\z_{-0} | z_0, \mathbf{w}\} p\left(z_{0}\right)p(\mathbf{w})$,
the final loss is 
\begin{equation}
\begin{split}
&\frac{1}{|S|} \sum_{s\in S} \left(\log p\left(x | \z^s, \mathbf{w}^s\right) - \log\frac{q(z_0^s)q(\mathbf{w}^s)}{p(z_0^s)p(\mathbf{w^s})}\right),\\
\end{split}
\label{eq:elbo_me} %
\end{equation}
where $S$ is a set:  $\{\forall s \in S: \1[\obs{\z}_{-0}]{\z_{-0} | z_{0}^s,\mathbf{w}^s} = 1\}$. 

\begin{remark}
While a MC approximation in \eqref{eq:elbo_me} is an unbiased estimator of the Lower Bound in \eqref{eq:elbo_me_prelim}, its variance is $O\left(\frac{1}{|S|}\right)$. This leads to efficiency issues in that it may require a large $M$ until we get $z_0$ and 
$\mathbf{w}$ to generate $\z_{-0}$ exactly along the observed trajectory to 
populate the set $S$. But we can address this problem using ABC methods \citep{wilkinson2013approximate, fearnhead2010semi}.
\end{remark}

\paragraph{Efficient sampling: approximating $\1[\z_{-0}^{\text{obs}}]{\z_{-0} | z_{0}, \mathbf{w}}$.}
ABC recommends finding samples of $z_0$ and $\mathbf{w}$ to generate trajectories $\z_{-0}$ which are {\em approximately} equal to the observed one, rather than exactly equal. The idea in \citep{marin2012approximate} proposes using \1[y]{z} as  $\1[A_{\epsilon, y}]{z}$, where 
$A_{\epsilon, y}=\{z \mid d\{z, y\} \leq \epsilon\}$ is an $\epsilon$-neighborhood of $y$, and $d$ is a distance function. 
For a direct application of these methods on $\1[\z_{-0}^{\text{obs}}]{\z_{-0} | z_{0}, \mathbf{w}}$, we must have access to $\obs{\z}_{-0}$ in the latent space, which is unavailable unless the encoder $E$ and the decoder $D$ are identity functions. 
Nonetheless, we can approximate $\1[\z_{-0}^{\text{obs}}]{\z_{-0} | z_{0}, \mathbf{w}}$, by comparing if the decoded $\z_{-0} | z_{0}, \mathbf{w}$ indeed corresponds to $\obs{\boldsymbol{x}}_{-0}$, i.e., we need to compute $\1[\obs{\boldsymbol{x}}_{-0}]{D(\z_{-0} | z_{0}, \mathbf{w})}$.
We simply use the mean squared error (MSE) as the distance for this comparison.

By decreasing $\varepsilon$, we can improve the quality of the samples for MC estimation, but at higher compute cost. However, because we learn the distribution of $z_0$, during the first steps of training, our model provides poor reconstructions. For this reason, setting $\varepsilon$ to a small value at the beginning of the training is inefficient.
Therefore, we make $\varepsilon$ adaptive through the training, by choosing the sample, closest to our observed trajectory, i.e., sample with smallest distance $d$, 
and $\varepsilon$ is a function of the initial point $z_0$.

{\bf Choice of $q(\textbf{w})$.} 
To optimize the ELBO, it is necessary to define the approximate posterior $q(\mathbf{w})$ and prior $p(\mathbf{w})$. While we assumed in \eqref{eq:lat_me} and Lemma~\ref{lemma:randproj} that the true distribution of $\mathbf{w}$ is Normal, $q(\textbf{w})$ and $p(\textbf{w})$ remain design choices for the user. For example, if we believe that the correlation structure of the data is sparse, then we have the following choices: Horseshoe \citep{carvalho2009handling}, spike-and-slab with Laplacian spike \citep{deng2019adaptive} or Dirac spike \citep{bai2020efficient}. However, for our experiments, we found that modeling $q(\textbf{w})$ and $p(\textbf{w})$ as Normal is sufficient.

{\bf Calibration.}
One feature of our model is that learned  distribution of mixed effects $\mathbf{w}$ can be used for personalized prediction during extrapolation \citep{wang2014estimating,ditlevsen2005mixed,bouriaud2019comparing}.
First, we train our model to learn the parameters of distribution of $z_0$, fixed effects $\boldsymbol{\beta}$, and variance of random effect $\Sigma_b$.
Then at test time, we make use of the observed temporal data for a previously unseen test subject $\obs{\boldsymbol{x}}$. %
Given the learned distribution of mixed effects $\mathbf{w}$, we 
want to find a sample $w \sim \mathbf{w}$, which minimizes error w.r.t. 
$\obs{\boldsymbol{x}}$. 
This 
selection provides the most appropriate 
mixed effect $w$ corresponding to 
$\obs{\boldsymbol{x}}$. We call this process calibration: a solution to  $\arg\!\min_{w \sim \mathbf{w}} \text{MSE}\left(\obs{\boldsymbol{x}}, \boldsymbol{\hat{x}}(w)\right)$, where $\boldsymbol{\hat{x}}(w)$ is a prediction from our model. 
Note that this is slightly different from the average (used in probabilistic models like VAE).

{\bf Method summary.} We provide a step-by-step summary, 

\begin{mybox}{gray}{\bf Train and test phases}
In the {\bf training} phase, 
the observed data for subject $i$, $x^i_t$ for $t \in [0, T]$ is assumed to be provided. Then, 
we 
\begin{compactenum}
\item Use a suitable encoder $E$ to map $\boldsymbol{x}^i=\{x^i_t\}$ to the latent representation of initial point $z_0$ of underlying ODE, as $z_{0}^{i} \sim \mathcal{N}(\mu, \sigma)$, where $\mu, \sigma = E(\boldsymbol{x}^i)$.
\item Given a suitable decoder $D$, we fit 
the ME-NODE model, by minimizing the loss in \eqref{eq:elbo_me}, thereby learning the appropriate distribution of mixed effects $\mathbf{w^i}$.
\end{compactenum}
The output from this phase 
latent representation $z^i_t$ described by ME-NODE model and 
the corresponding distribution of mixed effects $\mathbf{w^i}$.

In the {\bf test} phase, the 
observed data for subject $i$, $x^i_t$ for $t \in [0, T]$ is assumed to be provided. Then, we
\begin{compactenum}
\item Select a personalized mixed effect $w^i \sim \mathbf{w^i}$, according to the calibration scheme. 
\item Then, we use the selected mixed effect sample $w^i$, to generate personalized prediction for the subject $i$ for either interpolation or extrapolation.
\end{compactenum}
The output from 
this phase is the prediction for subject $i$, $\hat{x}^i_t$ for $t \in [0,\ldots, T^*]$, where $T^*$ can be extrapolated time, i.e., $T^* \ge T_i$, and/or a denser interpolation in  $[0, T]$.
\end{mybox}

\section{Experiments}
We evaluate our model on five temporal datasets: \begin{inparaenum}[\bfseries (1)] \item simulations, \item MuJoCo hopper, \item rotating MNIST, and \item two different Neuroimaging datasets, representing disease progression in the brain. 
\end{inparaenum} 

{\bf Goals.} We will evaluate: \begin{inparaenum}[\bfseries (a)] 
\item  ability to learn mixed effects, given different types of correlations in the data
\item the effect of mixed effects dimension $m$ on extrapolation power and confidence of the model, and
\item the ability to preserve statistical group differences in the data in latent representations.
\end{inparaenum}

The baselines are given separately for each experiment. We provide description of hardware and neural networks architectures, including encoder/decoder in appendix.

\vspace*{-1em}
\subsection{Synthetic dataset}
We start with a synthetic setup where all parameters are known. Using a ODE solver and conditioning on $z_0$ and $w$, we generate a solution of the mixed effect ODE 
\begin{equation}
\left[
\begin{array}{l}
 z_{0}^{i} \sim \mathcal{N}(\mu = 1.3, \sigma = 0.01)\\
 w^i \sim \mathcal{N}(\boldsymbol{\beta} = 0.3, \sigma_b = 0.01)\\
 \dot{z}^i_t = z^i_t w^i\\
\end{array}
\right.
\label{eq:toy_data}
\end{equation}
We set the encoder $E$ and decoder $D$ in 
\eqref{eq:me_ode} to the identity transformation.
Given $1000$ ($80:20$ split for train/test) numerical solutions of the ODE in \eqref{eq:toy_data}, we 
uniformly sampled $20$ time points from $[0,3]$, and use 
the first $10$ time steps for interpolation and the last $10$ for extrapolation. %

{\bf Parameters.} As the optimization of ELBO in  \eqref{eq:elbo_me} requires samples, we evaluate the performance of our model by varying the number of samples for $z_0$ and $w$ (denoted by $n_{z_0}$ and $n_w$ respectively). The results in Table \ref{tab:toy_data} suggest that MSE goes down with an increase of $n_{z_0}$ or $n_w$.

In Figure~\ref{fig:toy_extrap} ({\it top panel}), we show samples (trajectories) drawn from the learned model (blue lines) with the real trajectories (`x' marker). Notice that the sampled trajectories from the learned model almost cover the ``range''  of real trajectories and the results appear meaningful.

\begin{table}[!tb]
\centering
\scalebox{0.88}{
\begin{tabular}{ c |c c c c | c}
\specialrule{1pt}{1pt}{0pt}
\rowcolor{tableheadcolor}
{Estimated}& \multicolumn{4}{c|}{$n_{z_0}, n_w$} &
{True}\\
         \hhline{>{\arrayrulecolor{bluecolor}}->{\arrayrulecolor{black}}---->{\arrayrulecolor{bluecolor}}->{\arrayrulecolor{black}}} %
          \rowcolor{tableheadcolor}parameters  & $1, 1$ & $1, 10$ & $10, 1$ & $10, 10$    & values \\
 \toprule
 $\hat{\mu}$ & $1.252$ & $1.258$ & $1.316$ & $1.313$ & $1.3$\\
 $\hat{\sigma}$ & $0.003$ & $0.002$ & $0.005$ & $0.016$ & $0.01$\\
 $\hat{\boldsymbol{\beta}}$ & $0.311$ & $0.315$ & $0.311$& $0.319$ & $0.3$ \\
 $\hat{\sigma_b}$ & $0.036$ & $0.051$ & $0.054$ & $0.060$ & $0.01$\\
 \bottomrule
 MSE (all) & 0.0017 & 0.0011 & 0.0006 & 0.0005 & \cellcolor{gray}  \\
\end{tabular}
}
\caption{\footnotesize The first four rows show the estimated parameters for specific choices of $n_{z_0}$ and $n_w$. Here we use the following convention: $(n_{z_0}, n_w)=(i, j)$ denotes we draw $i$ and $ij$ number of samples from $z_0$ and $w$ respectively. The last row presents the MSE values for the estimated parameters.} 
\label{tab:toy_data}
\vspace*{-1em}
\end{table}

{\bf Mixed effects.} To evaluate  generation of a personalized prediction  for a subject $i$, by learning mixed effect $w^i$, recall that we split our data in two parts: interpolation (observed) and extrapolation (unknown). We use observed samples (interpolation part) to calibrate the mixed effect $w^i$ and pass $w^i$  to the selected trajectory during extrapolation. Figure~\ref{fig:toy_extrap} ({\it bottom panel}), shows that personalized prediction (green line) follows the observed data nicely across the entire time interval. In comparison, the standard BNN approach generates trajectory close to observed data for interpolation, and fails to extrapolate as well as our proposed model.

{\bf Runtime.} For 1000 samples, runtime for an epoch of our method is 2.1 seconds, while Neural SDE takes about 27.13 seconds for the same memory utilization ($\sim$965MB).

\begin{figure}[!t]
    \centering
    \includegraphics[width=0.49\columnwidth, trim={0.8cm, 0.8cm, 0.2cm, 0.3cm}, clip]{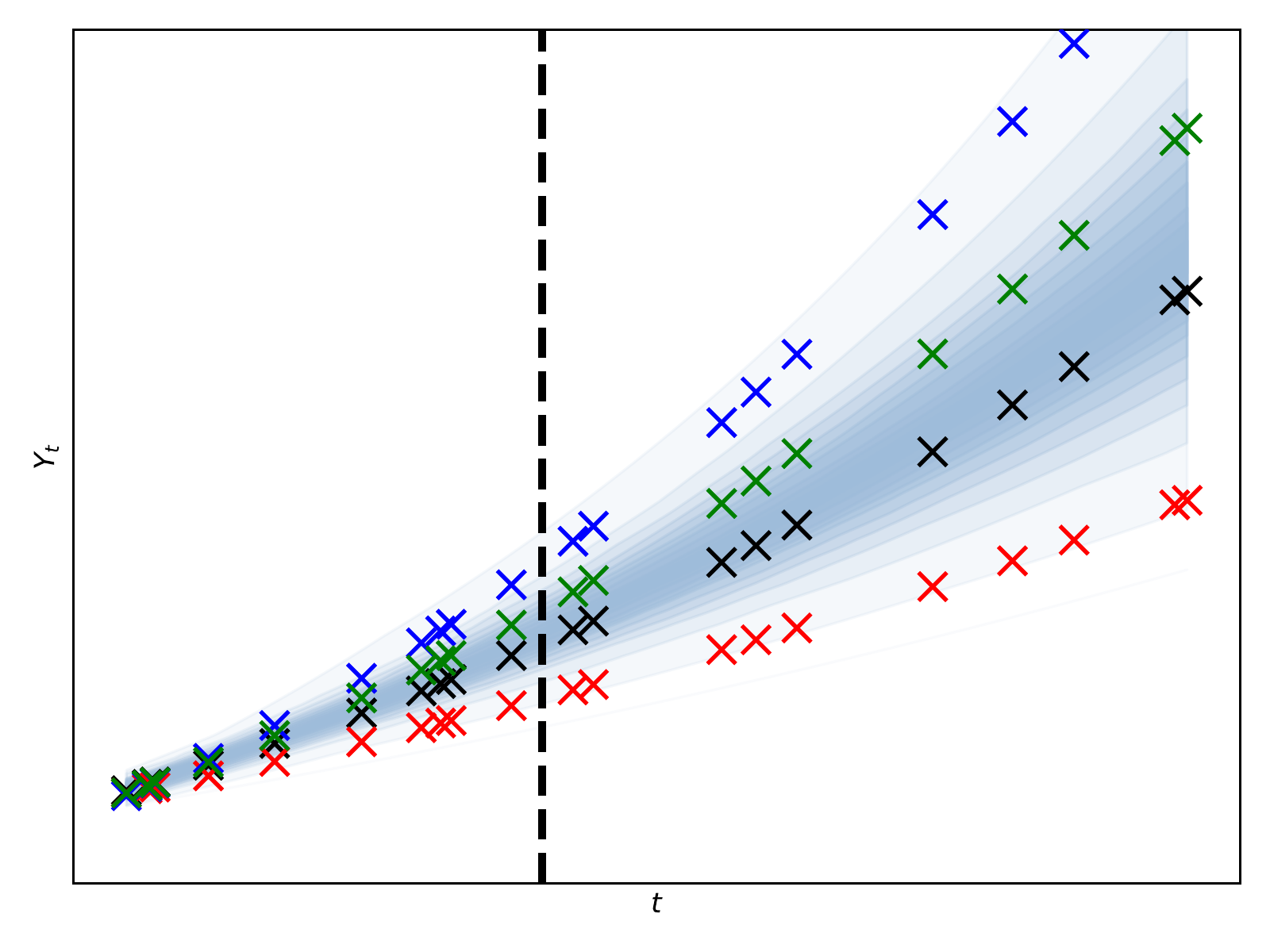}
    \includegraphics[width=0.49\columnwidth, trim={0.8cm, 0.8cm, 0.2cm, 0.3cm}, clip]{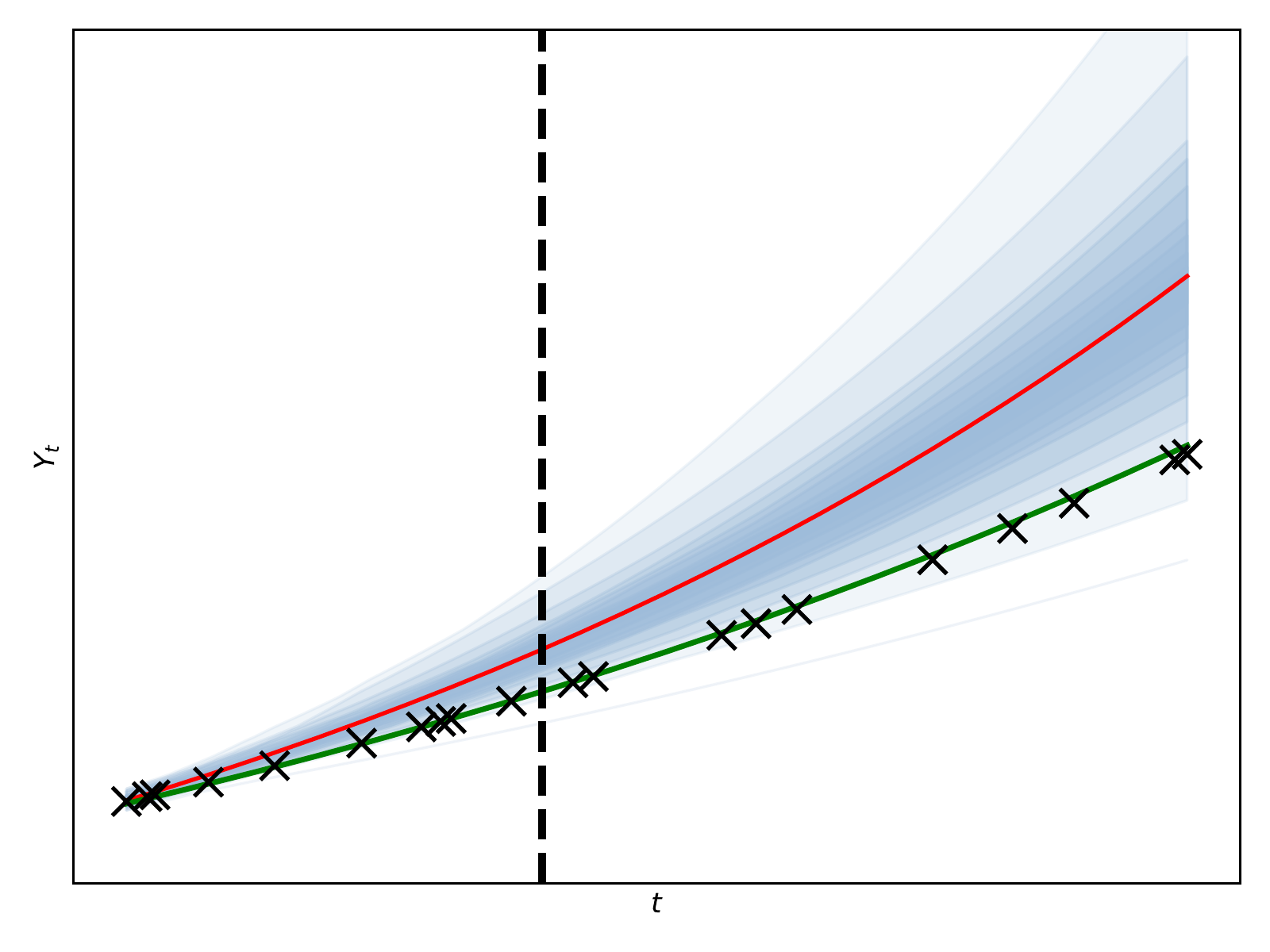}
    
    \caption{\footnotesize \footnotesize Real samples are denoted by ``x" while blue lines are predictions. For each subfigure, the LHS and RHS of the dotted line contains interpolation and extrapolation results respectively.  %
    \textit{Left:} the learned distribution of trajectories (from all test subjects) satisfies dynamic process described by the known system \eqref{tab:toy_data}. \textit{Right:} using the interpolation for calibration, we infer mixed effect $w^i$ for $i^{th}$ trajectory and generate a personalized prediction (green line). The prediction using BNN is  shown in red.}
    \label{fig:toy_extrap}
    \vspace*{-2em}
\end{figure}

\vspace*{-1em}
\subsection{MuJoCo Hopper}
Here, we evaluate the performance of our model for simple Newtonian physics.
Similar to  NODE \citep{rubanova2019latent}, we created a physical simulation using  MuJoCo Hopper. While in \citep{rubanova2019latent}, the generated samples were i.i.d, we explicitly introduce correlation  between the samples. The process of MuJoCo Hopper is defined by the initial position and velocity. In order to generate correlated samples, we choose the initial velocity from the pre-specified set containing $1, 4$ or $8$ vectors. The entries of the velocity vectors are uniformly sampled from $[-2,2]$.
We evaluate our model on interpolation (10 steps) and extrapolation (10 steps) and compare results with NODE in  Table~\ref{tab:mujoco_data}. As our model implicitly learns correlation structure of the data by learning the distribution of mixed effect $q(\mathbf{w})$, we see an improvement in both interpolation and extrapolation.
In addition, Figure~\ref{fig:mujoco} presents  representative extrapolated samples using our proposed model.

\begin{table}[!b]
\centering
\scalebox{0.95}{\footnotesize
\begin{tabular}{  c c c c c c  }
\cmidrule[\heavyrulewidth]{1-5}
\addlinespace[-\belowrulesep]
  {\cellcolor{tableheadcolor}} &
  {\cellcolor{tableheadcolor}}& \multicolumn{3}{c}{\cellcolor{tableheadcolor}velocities}\\
  \hhline{-----}
\rowcolor{tableheadcolor}
  & model & 1 & 4 & 8\\
  \toprule
\multirow{2}{*}{Interpolation} & NODE & $7.4$ & $5.4$ & $5.5$   \\
                    & This work& \cellcolor{Maroon}${5.7}$ & \cellcolor{Maroon}${4.6}$ & \cellcolor{Maroon}${4.6}$ \\   
                              \midrule
\multirow{2}{*}{Extrapolation} & NODE & $166.1$  & $82.1$ & $80.3$   \\
                    & This work& \cellcolor{Maroon}${164.1}$  & \cellcolor{Maroon}${81.2}$ & \cellcolor{Maroon}${80.0}$ \\ 
 \bottomrule
\end{tabular}
}
\caption{\footnotesize MSE (in scale of $10^{-3}$) on MuJoCo Hopper data set, generated for three settings: 1, 4 and 8 initial velocities. We compare these two models using identical neural networks with the same number of levels and hyperparameters, however, in our model the dimension of mixed effect is $m=50$.}
\label{tab:mujoco_data}
\end{table}

\begin{figure}[!t]
    \centering
    
    \includegraphics[width=0.95\columnwidth, trim={31.65cm, 0cm, 0cm, 0cm}, clip]{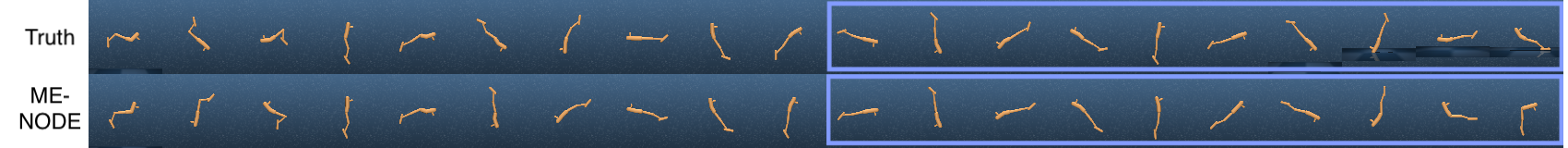}
    \caption{\footnotesize Visualization of $10$ steps of extrapolation after observing $10$ previous steps, with dimension of mixed effect $m=50$. 
    \textit{Top} ground truth, \textit{bottom} our prediction.}
    \label{fig:mujoco}
\end{figure}
\vspace*{-1em}

\subsection{Rotating MNIST}
We now 
evaluate a slightly more complicated rotating MNIST dataset. %
Here, we consider different types of correlations in the data and check: \begin{inparaenum}[\bfseries (i)] \item relation between mixed effect dimension $m$ (which we can think of as a dimension of random projection) and performance of the model, \item  the performance of personalized prediction for extrapolation in comparison with  standard BNN. \end{inparaenum}

{\bf Data description.} 
Similar to the setup in ODE2VAE \citep{yildiz2019ode2vae}, we construct a dataset by rotating the images of different handwritten digits, in order to learn a digit specific mixed effect model.
In ODE2VAE, digits were rotated by $22.5^{\circ}$. We used a slightly different scheme: for a sampled digit we randomly choose an angle from the set of $1, 4,$ or $8$ angles from the range $[-\pi/4, \pi/4]$ and apply it at all time steps. 
For example, if we choose the set containing $4$ angles, then a sampled digit is rotated using  one of the $4$ angles, selected randomly. 
In order to simulate a practical scenario, %
we spread out the initial points, by randomly rotating  a digit by angles from $-\pi/2$ to $\pi/2$. 
We generate 10K samples of different rotating digits for $20$ time steps and split it in  two equal sets: interpolation and extrapolation. %

{\bf Effect of mixed effect (random projection) dimension $m$.}
Recall from Lemma \ref{lemma:randproj}, we showed  that mixed effects $\mathbf{w}$ in \eqref{eq:me_ode} can be considered as a random projection. So, we can expect that with an increase in $m$, MSE should decrease as it leads to a richer latent  representation of a trajectory. We see that this is indeed true as shown in Figure~\ref{fig:mnist_bar} ({\it left panel}). 
Here, we demonstrate the MSE of reconstruction for three values of $m$: $1, 20,$ and $50$.  We observe that for each time step, MSE decreases monotonically with an increase of $m$. 

\begin{figure}[!bt]
    \centering
    \scalebox{0.99}{
    \includegraphics[width=0.55\columnwidth]{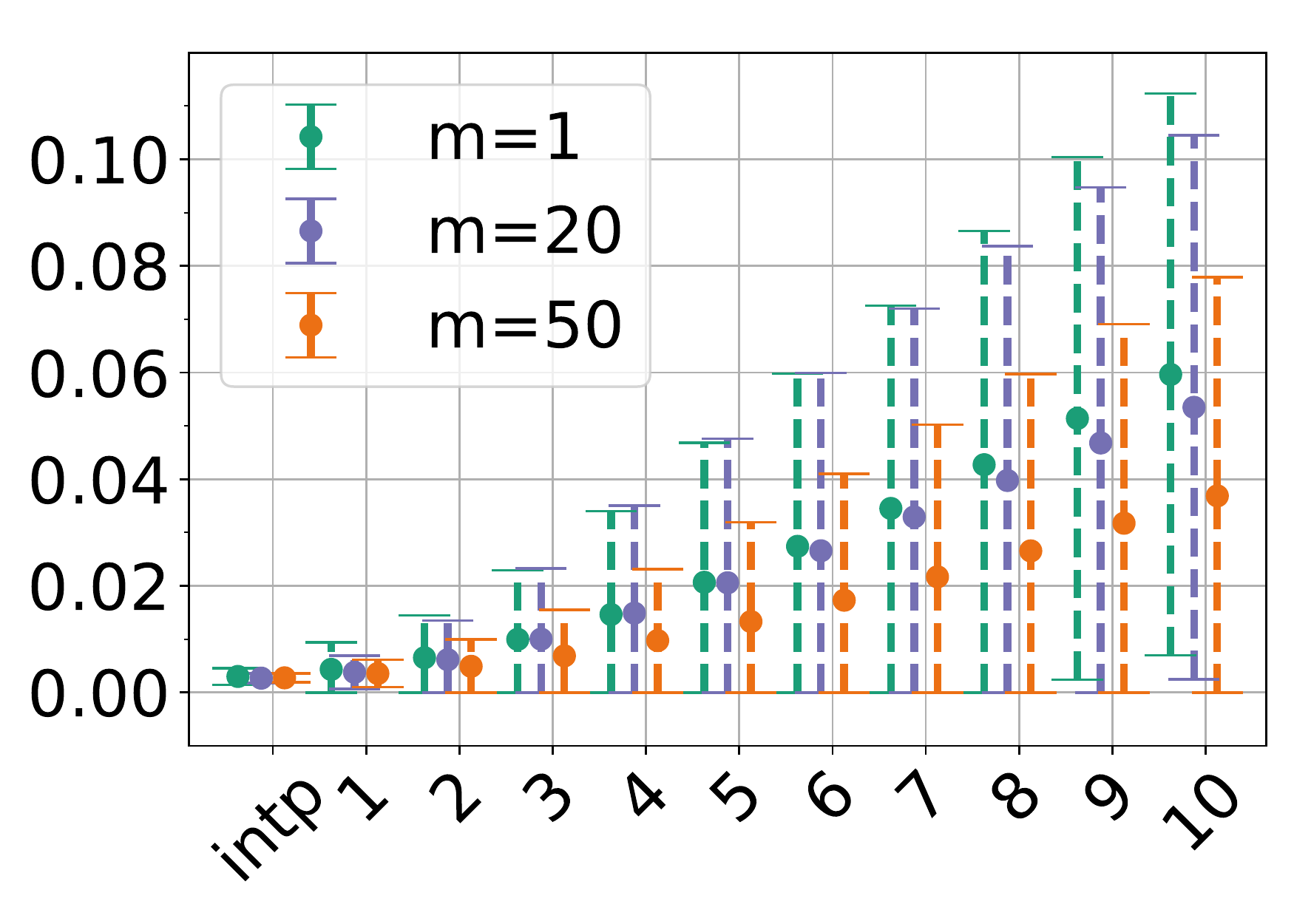}
    \includegraphics[width=0.465\columnwidth,trim={2.5cm, 0.0cm, 0.0cm, 0.0cm},clip]{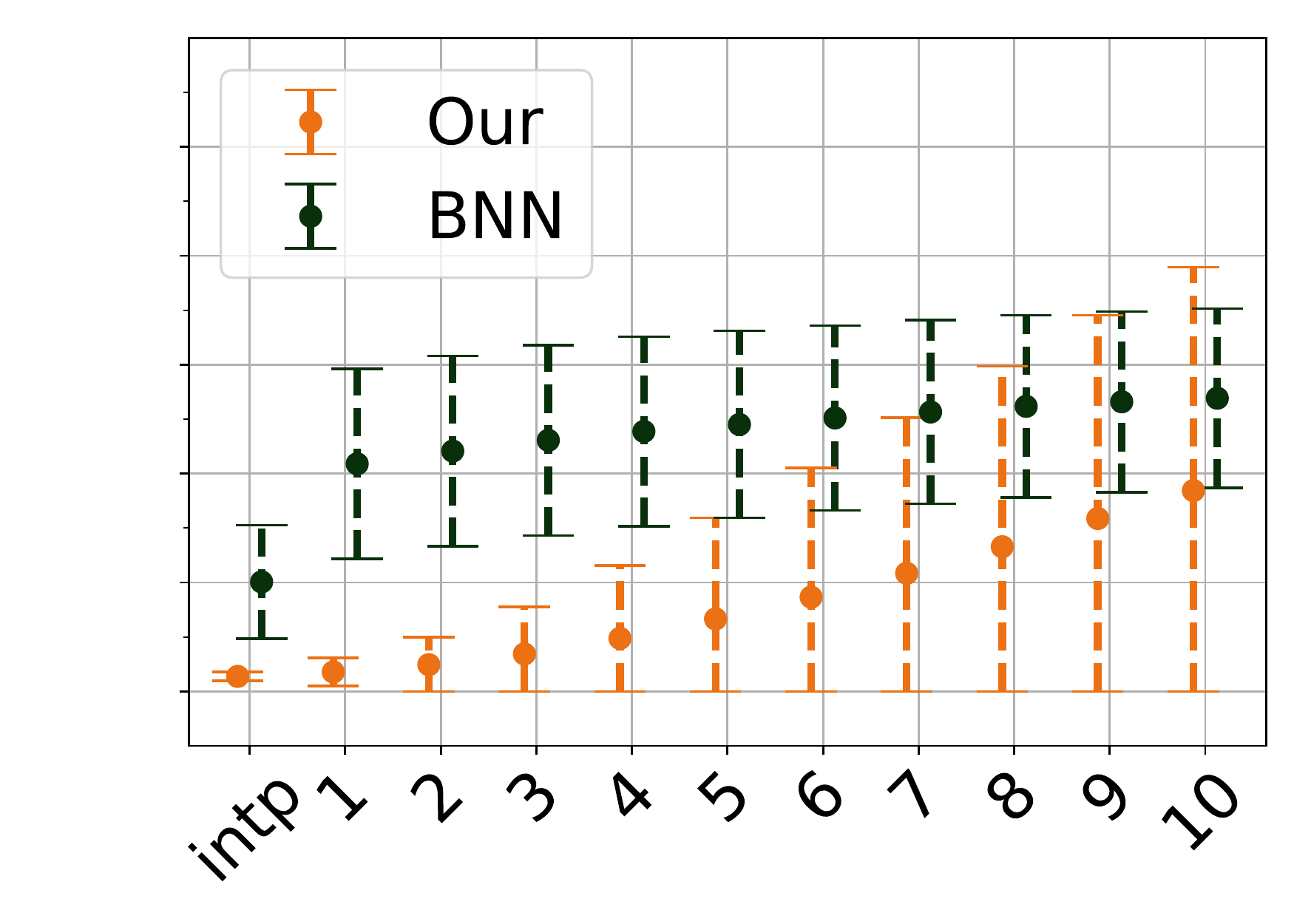}
    }
    \vspace*{-12pt}
    \caption{\footnotesize Distribution ($\mu\pm\sigma$) of MSE:
    \textit{left:}  varying with $m$: $1, 20, 50$,
    \textit{right:} 
    first point on x-axis (`intp') indicates  average MSE for all time steps: (1-10) of interpolation, and 1-10 indicate steps of extrapolation. Results for data with $8$ possible angles shown.
    }
    \label{fig:mnist_bar}
\end{figure}

{\bf Calibration for personalized prediction.}
Here we choose $m=50$ and compare our model for interpolation with ODE2VAE using $3$ different rotation settings: $1, 4,$ and $8$ available angles. The calibration results in Table~\ref{tab:ode2vae_our} show that our model significantly outperforms the baseline ODE2VAE; however, making the correlation structure of the data more complicated (increasing number of possible rotation angles), does lead to a larger MSE. This is expected: with an increase in complexity of correlation in data, the learning task  (and thereby, prediction) becomes harder.

Recall that in order to generate a personalized prediction for subject $i$ we have to sample mixed effect $\mathbf{w}^i$ resulting in trajectory closed to the observed. Thus, if the model fails to learn the distribution $q(\mathbf{w})$ accurately, sampling such $\mathbf{w}^i$  is less likely, and will result in a larger interpolation error. %

For extrapolation, we compare with a standard BNN approach  in Fig.~\ref{fig:mnist_bar}. We observe that for each extrapolation step, we obtain, on average, much smaller MSE and smaller variance during the initial steps of extrapolation.
\vspace*{-1em}
\begin{table}[!b]
\vspace*{-1em}
\centering
\scalebox{0.95}{
\begin{tabular}{  c c c c c c  }
\cmidrule[\heavyrulewidth]{1-5}
\addlinespace[-\belowrulesep]
  {\cellcolor{tableheadcolor}}& 
  {\cellcolor{tableheadcolor}}& \multicolumn{3}{c}{\cellcolor{tableheadcolor}angles}\\
  \hhline{-----}
\rowcolor{tableheadcolor}
  & model & 1 & 4 & 8\\
  \toprule
\multirow{2}{*}{Interpolation} & ODE2VAE & $0.0648$ & $0.0644$  & $0.0640$   \\
                    & Ours-50& \cellcolor{Maroon}${0.0006}$ & \cellcolor{Maroon}${0.0014}$ & \cellcolor{Maroon}${0.0027}$ \\   
                             \bottomrule%
                    
\end{tabular}
}
\caption{\footnotesize MSE of two models, given different complexity of the data. Low interpolation error of our model indicates the properly learned mixed effect distribution $q(\mathbf{w})$.}
\label{tab:ode2vae_our}
\vspace*{-2em}
\end{table}
\paragraph{Extrapolation steps.} Earlier, the number of steps for extrapolation were smaller than the number of observed steps, used for calibration. In Fig. ~\ref{fig:calibrat}, we show results of interpolation and extrapolation, varying the number of observed time steps used for calibration. Expectedly, decreasing the number of steps to be small for calibration yields a smaller number of steps where the extrapolation is meaningful. 

\begin{figure}[!tb]
    \centering
    \includegraphics[width=0.98\columnwidth, trim={1.9cm, 9.5cm, 2.5cm, 10cm}, clip]{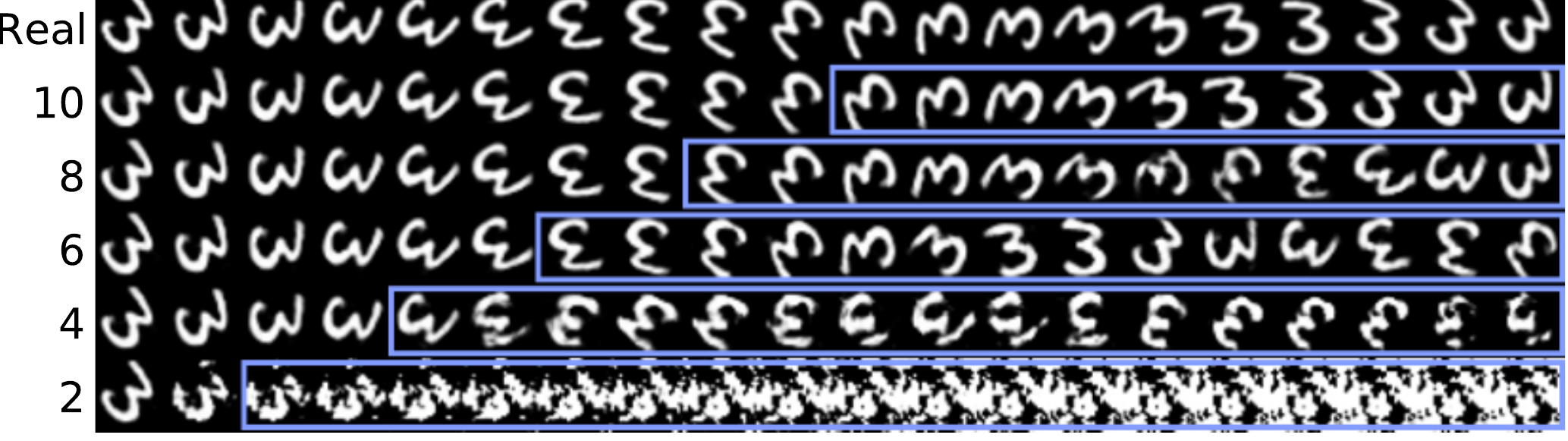}
    \caption{\footnotesize Visualization of extrapolated results (blue frame), given $n$ time steps for calibration, where $n$ is shown on y axis. The top row indicate real data.}
    \label{fig:calibrat}
    \vspace*{-1em}
\end{figure}
\vspace*{-1em}
\subsection{Longitudinal Neuroimaging data}
In this section, we conduct experiments on two longitudinal brain imaging datasets obtained 
from Alzheimer's Disease Neuroimaging Initiative (ADNI) (\url{adni.loni.usc.edu}), both of which describe AD progression through time, but are derived from 
two different imaging modalities. 

\paragraph{Effect of number of dimensions $m$.}
We conducted experiments for different values of the mixed effect (random projection) dimension $m$, see appendix. 
We find that while for any $m$ interpolation looks similar to real data, 
the further we move in extrapolation, the more noticeable the differences are. For example, for $m=1$ some frames look blurry and in the last steps of extrapolation, 
the rotation is wrong. 
Increasing the dimension of random projection to $m=20$ improves image quality, but does not fix rotation. Increasing dimension further to $m=50$, not only improves quality of digits, but also leads to a better prediction of rotation.

\begin{figure}[!bt]
    \centering
    \scalebox{0.4}{
    \includegraphics[width=0.3\textwidth]{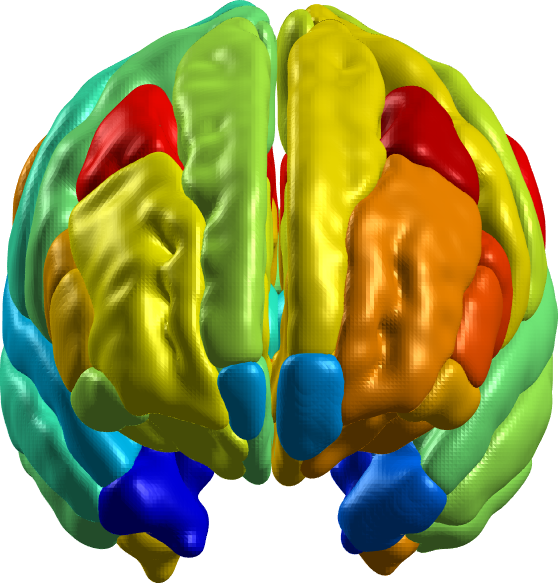}
    \includegraphics[width=0.3\textwidth]{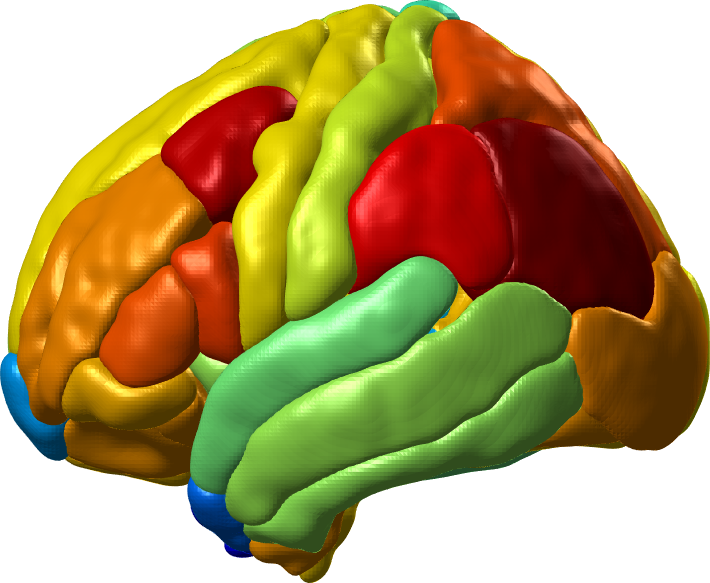}
    \includegraphics[width=0.3\textwidth]{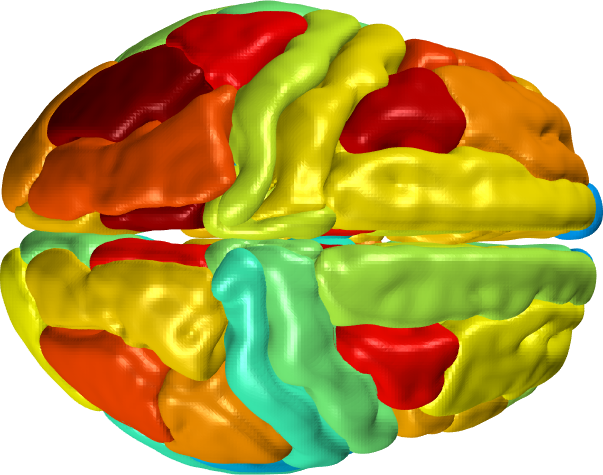}
    }
    
    \scalebox{0.4}{
     \includegraphics[width=0.3\textwidth]{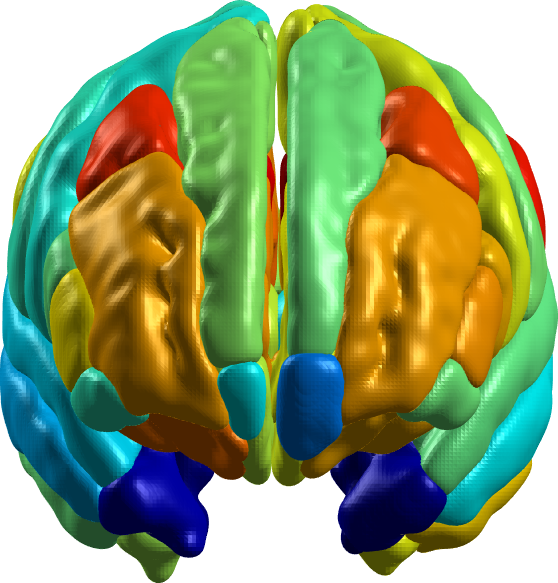}
    \includegraphics[width=0.3\textwidth]{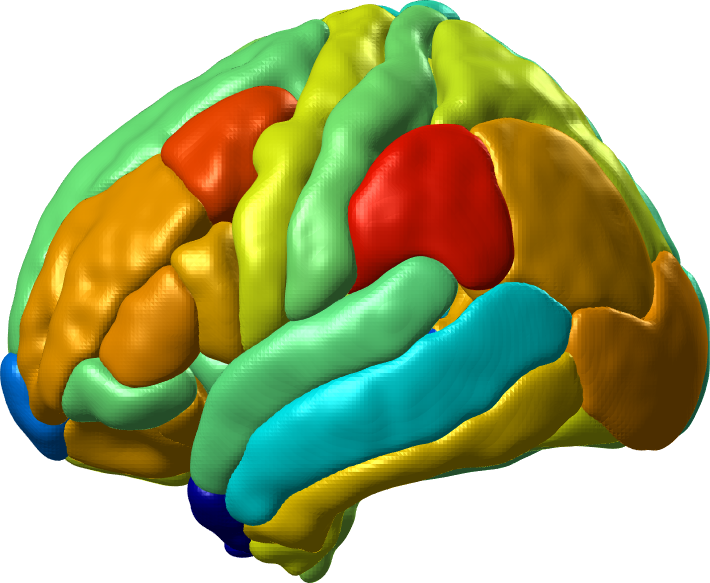}
    \includegraphics[width=0.3\textwidth]{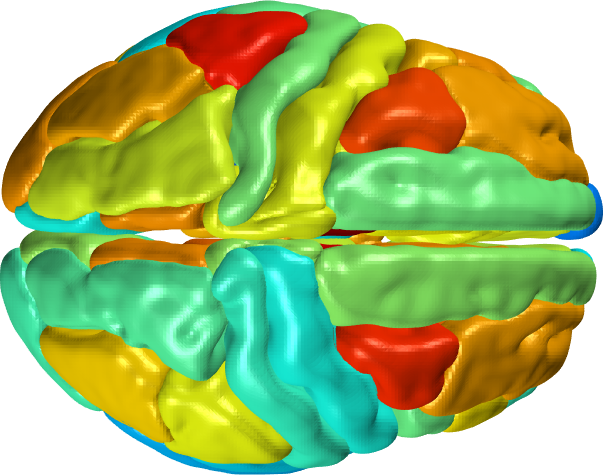}
    }
    
    \scalebox{0.4}{
    \includegraphics[width=0.3\textwidth]{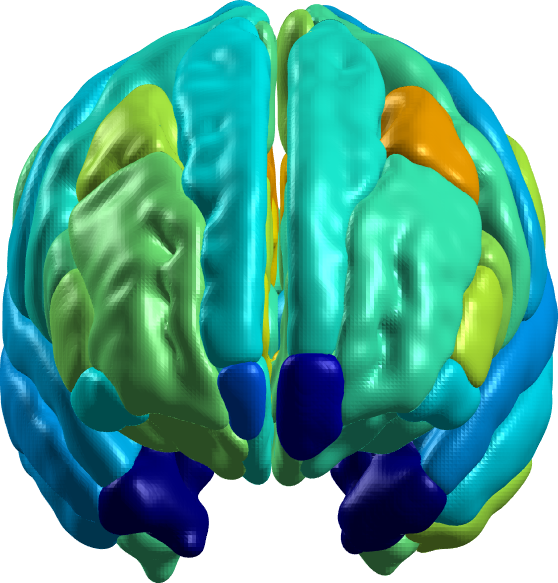}
    \includegraphics[width=0.3\textwidth]{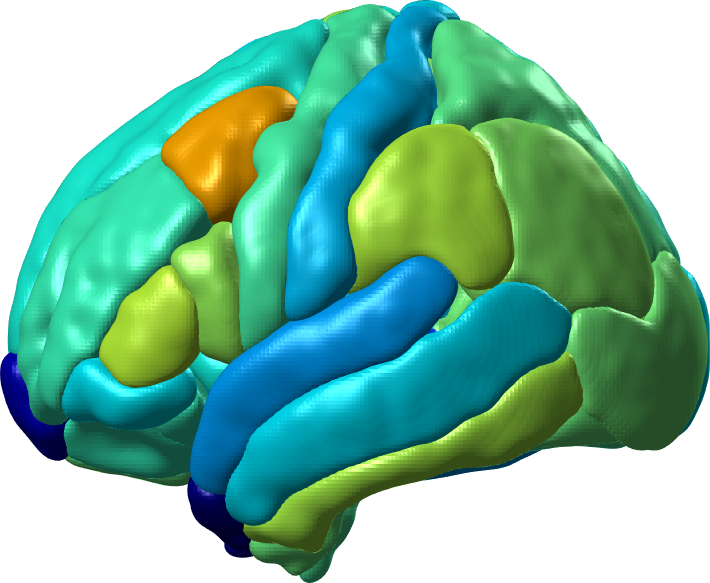}
    \includegraphics[width=0.3\textwidth]{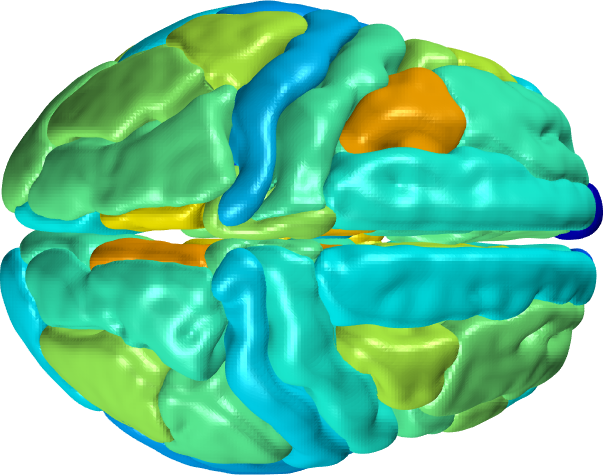}
    }
    
    \includegraphics[width=0.8\columnwidth]{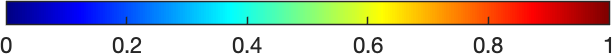}
    \caption{\footnotesize \textit{Top - bottom row:} ground truth, our predicted, BNN predicted. We present the state of the brain at time step $t=3$.  Red and blue indicate high and low AV45 respectively. Compared to the baseline, our model is able to generate a sample for the subject with predicted values of AV45 closer to the ground truth.}
    \label{fig:tadpole}
    \vspace*{-1em}
\end{figure}
{\bf (A) TADPOLE.}
TADPOLE dataset includes data for $276$ participants with $3$ time points. It represents Florbetapir (AV45) Positron Emission Tomography (PET) scans, which measure the level of amyloid-beta pathology in the brain  \citep{marinescu2018tadpole}. Scans were registered to a template (MNI152) to derive the $82$ gray matter regions. Thus, each sample, at time $t$ is a $82$ dimensional vector, i.e., $\mathbf{x}^t\in \mathbf{R}^{82}$.

\begin{figure*}[!bth]
    \centering
    \scalebox{0.6}{
     \includegraphics[width=0.36\textwidth, trim = {1.2cm, 0.6cm, 1.1cm, 1.5cm}, clip]{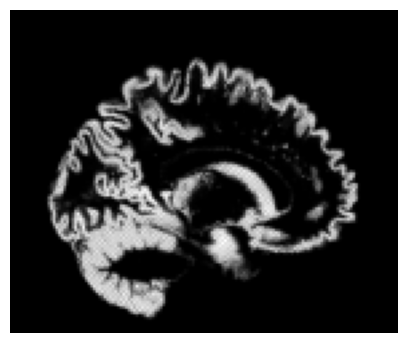}
    \includegraphics[width=0.36\textwidth, trim = {1.2cm, 0.6cm, 1.1cm, 1.5cm}, clip]{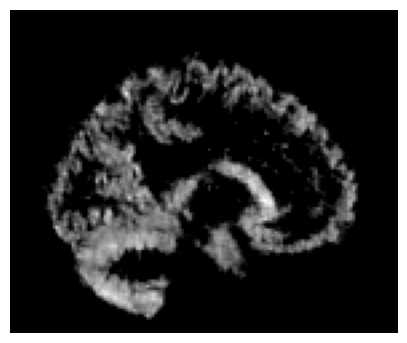}
     \includegraphics[width=0.278\textwidth, trim = {1cm, 1.7cm, 1cm, 1.5cm}, clip]{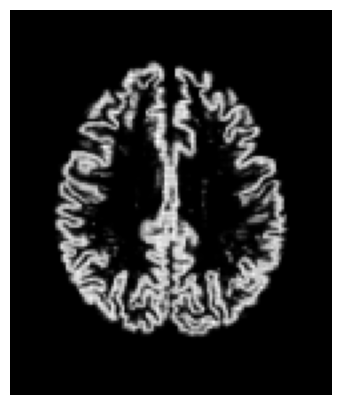}
    \includegraphics[width=0.278\textwidth, trim = {1cm, 1.7cm, 1cm, 1.5cm}, clip]{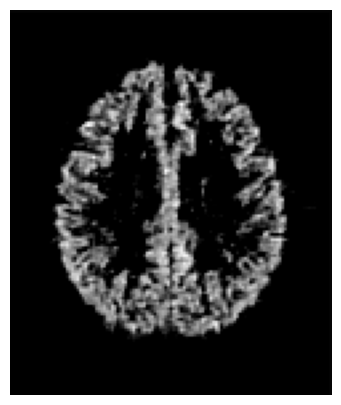}
    }
    \vspace{-8pt}
    \caption{\footnotesize {\it Left-Right} Ground truth (1, 3) and prediction (2, 4) of two slices (side and top) of 3D brain at time point $3$.}
    \label{fig:adni}
\end{figure*}

Given a few time points (3 time points), we evaluate generation of a personalized prediction for a subject in an interpolation setting. We compare our personalized prediction with ground truth and prediction using a standard BNN approach. 
The predictions of both our model and BNN are based on samples from the same distribution $q(\z, \mathbf{w})$. However, Fig. ~\ref{fig:tadpole} shows that the calibration  of our model  (Fig. ~\ref{fig:tadpole}, {\it second row}) provides better prediction than BNN (Fig. ~\ref{fig:tadpole}, {\it third row}). Even though the learned distribution of mixed effects $q(\mathbf{w})$ is capable of providing the correct trajectory (calibrated prediction), the direct application of the model without personalized calibration (BNN) leads to high subject-wise uncertainty. %

{\bf (B) ADNI.}
Our second dataset from ADNI contains 
processed MRIs (3D brain scans) of size $105 \times 127 \times 105$ per subject at $3$ time steps. The subjects are divided into two groups: diagnosed with Alzheimer's disease (abnormal: $377$ subjects) and healthy controls (normal: $152$ subjects).

\begin{figure}[!tb]
    \centering
    \includegraphics[width=0.75\columnwidth, trim={3cm 0.6cm 2.5cm 0.7cm}, clip]{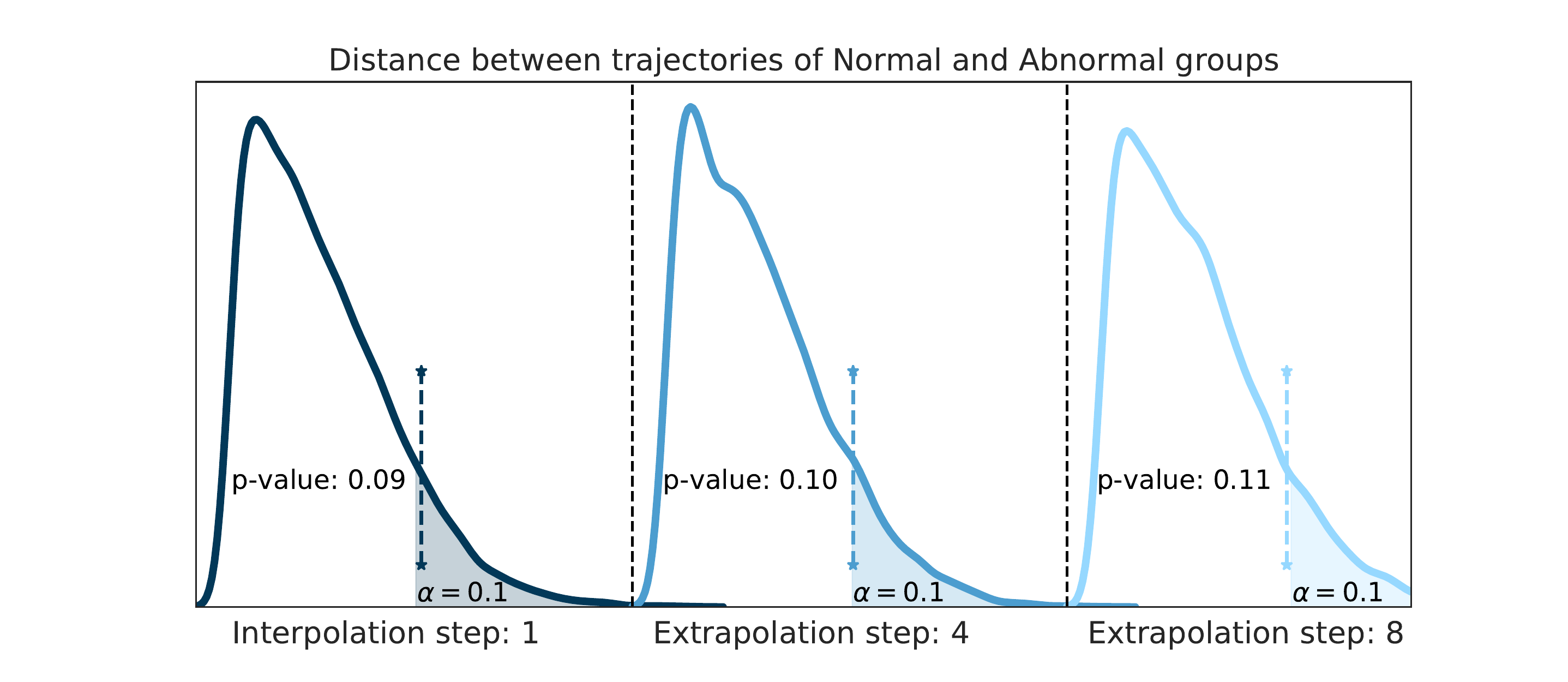}
    \vspace{-8pt}
    \caption{\footnotesize Distribution of distances resulted from permutation test  with original distances (dashed lines), and corresponding $p$-values. The difference between trajectories is significant up to the $7^{th}$ extrapolated step and degradation from $8$.
    }
    \label{fig:adni_test}
    \vspace*{-2em}
\end{figure}

Given high resolution 3D images, we would like to evaluate whether our model is able to learn the distribution of mixed effects and perform calibration for personalized prediction. Similar to TADPOLE, we conduct an interpolation experiment and provide representative samples of brain images in Figure~\ref{fig:adni}. We find by 
inspecting the axial/sagital/coronal views
that our model yields meaningful brain images.%
To evaluate the extrapolation capability, given limited number of time steps (only $3$), we perform a statistical test. Recall that our method explicitly models the mixed effect term inside the trajectory to learn the data hierarchy. If our method works as intended, there should be a statistical difference between latent space of trajectories for normal and diseased/abnormal groups. Ideally, this difference should be preserved for several more extrapolation steps. To check this, 
we train our model on $3$ time points. During testing, we use $3$ observed time points for calibration and extrapolate for $5$ more time steps: we get latent trajectories defined for $8$ ($3$ interpolation and $5$ extrapolation) time points. Finally, the resultant trajectories are used to evaluate differences between normal and abnormal groups via a permutation test. The resultant distribution of distances and $p$-values for interpolation and extrapolation is in Fig. ~\ref{fig:adni_test}. As expected, for interpolation and some steps of extrapolation (up to step 7) differences between trajectories is significant (with $p$-value $\leq 0.1$), and becomes less significant with more extrapolation steps. %

\section{Discussions and Conclusions}
We proposed a novel ME-NODE model that enables us to incorporate both fixed and random effects for analyzing the dynamics of panel data. 
Our evaluations on several different tasks show that the ME-NODE loss function can be trained using existing ODE solvers in a  stable and efficient manner. We see various benefits  
from incorporating mixed effects, 
\textbf{(1)} model explicitly learns the correlation structure of the data during the training, which  improves prediction accuracy in setups where samples can be grouped by some criteria;
\textbf{(2)} in contrast to generative models where only initial point is sampled from the distributions, by fixing initial point $z_0$ we can still provide uncertainty of the predictions, because from one initial point we can sample different trajectories;
\textbf{(3)} since our model learns random effects for individual $i$, it allows personalized prediction, given a short history of data, which is useful in biomedical or scientific applications with a limited number of time points per individual along trajectory. One limitation of our approach is that there is not an explicit noise handling mechanism for test time calibration and prediction. This is problematic in large scale high dimensional settings. For example, say that the encoding distribution $\mathcal{N}(\mu,\Sigma)$ produces a small fraction of noisy trajectories. Even in small noise settings, filtering them  for robust personalized prediction requires solving complex optimization problem \citep{bakshi2021list} and so handling noise is especially an open problem in real time, edge deployments.
The code is available at 
\url{https://github.com/vsingh-group/panel_me_ode}.

\section*{Acknowledgments}

This work was supported by 
NIH grants RF1 AG059312 and RF1	AG062336. 
SNR was supported by UIC start-up funds. We thank Seong Jae Hwang for sharing code 
and describing the experiments in 
\cite{hwang2019conditional}.

\bibliography{ref}

\newpage
\appendix

\twocolumn[{%
 \centering
 \LARGE APPENDIX
}]

\section{Proofs}

\subsection{Derivation for final loss}
\begin{equation*}
\begin{split}
\log p(x)\geq & \mathbb{E}_{q(\z, \mathbf{w})}\log p\left(x | \z, \mathbf{w}\right)-KL(q(\z, \mathbf{w}) \| p(\z, \mathbf{w})) \\
\approx&\frac{1}{M} \sum_{m=1}^M \log p\left(x | \z, \mathbf{w}\right)\cdot \1[\obs{\z}_{-0}]{\z_{-0} | z_{0}^m,\mathbf{w}^m} +\\
&\frac{1}{M} \sum_{m=1}^M \log\frac{q(\z, \mathbf{w})}{p(\z, \mathbf{w})}\cdot \1[\obs{\z}_{-0}]{\z_{-0} | z_{0}^m,\mathbf{w}^m}\\
=&\frac{1}{M} \sum_{m=1}^M \left(\log p\left(x | \z, \mathbf{w}\right) + \log\frac{q(\z, \mathbf{w})}{p(\z, \mathbf{w})}\right)\cdot\\
&~~~~~~~~~~~~~~~\1[\obs{\z}_{-0}]{\z_{-0} | z_{0}^m,\mathbf{w}^m} \\
=&\frac{1}{|S|} \sum_{s\in S} \left(\log p\left(x | \z^s, \mathbf{w}^s\right) + \log\frac{q(z_0^s)q(\mathbf{w}^s)}{p(\z^s, \mathbf{w^s})}\right),\\
\end{split}
\end{equation*}
where $S$ is a set, such that  $\{\forall s \in S: \1[\obs{\z}_{-0}]{\z_{-0} | z_{0}^s,\mathbf{w}^s} = 1\}$  and $|S|$ is its size.

\section{Experiments}
\subsection{Rotating MNIST}

\subsubsection{Ability to capture different angles}
In Figure~\ref{fig:supp_mnist_best_vs_mean} we provide visualization of 2 samples with the same digit style, but 2 different angles of rotation through interpolation and extrapolation.

\begin{figure*}[!ht]
    \centering
    \includegraphics[width=\textwidth, trim={1cm, 9.5cm, 2cm, 9.9cm}, clip]{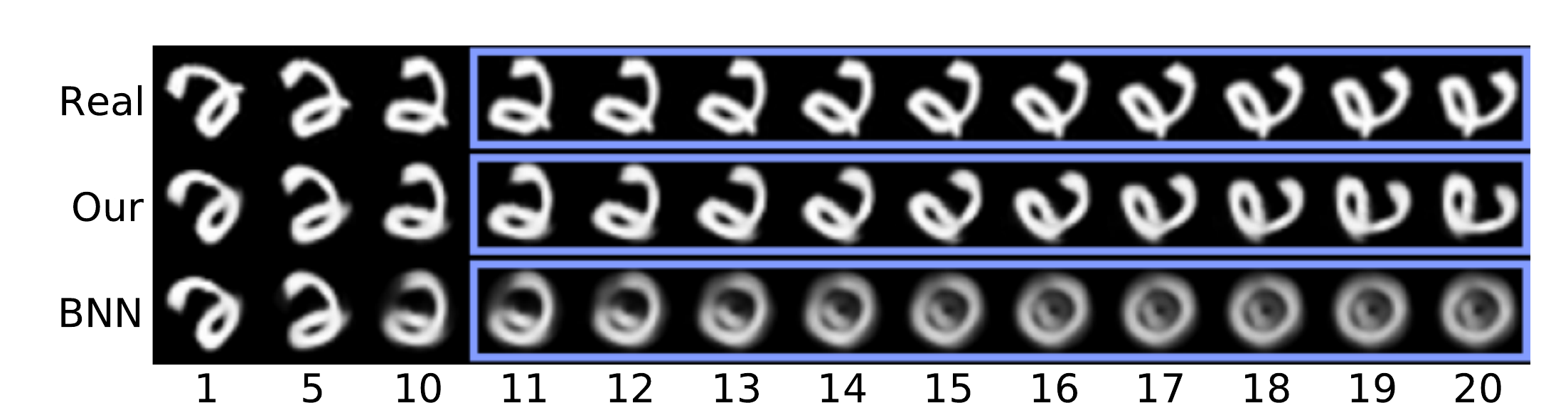}
    \includegraphics[width=\textwidth, trim={1cm, 9.5cm, 2cm, 9.9cm}, clip]{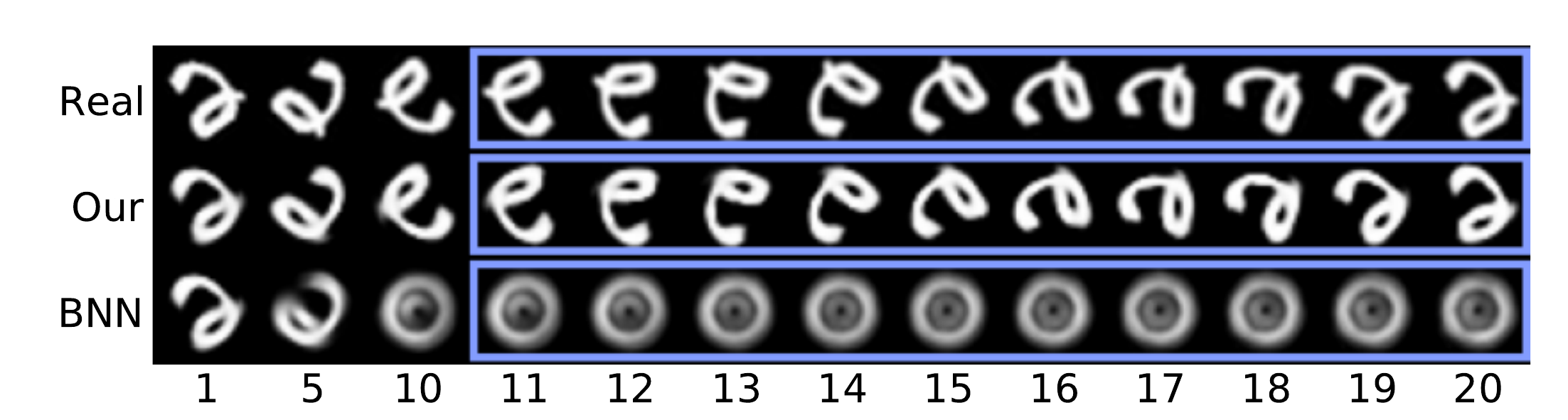}
    \caption{Visualization for two samples from data set with 8 possible angles (slow rotation--top, fast rotation -- bottom). Because of space limitation we show only 3 time steps of interpolation(1, 5, 10) and all steps of extrapolation (11-20) -- blue frame.  We show the calibration effect of our model on extrapolation, compare to BNN. 
    We see that with slow rotation interpolation on all 10 steps for bnn is a little worse than our, but still sensible, while extrapolation is not good anymore. Same time, for fast rotated data even for interpolation BNN provides worse results, and very bad for extrapolation. While our method is good for both.}
    \label{fig:supp_mnist_best_vs_mean}
\end{figure*}

\subsection{ADNI}
Following ADNI setup from the main paper, in Figures \ref{fig:supp_adni_side} and \ref{fig:supp_adni_top}, we provide another samples of our model, comparing with BNN. To evaluate the result visually, we provide a difference between real and prediction, for both our method and BNN. We see that our model gives much better results.

\begin{figure*}[!ht]
    \centering
    \includegraphics[width=0.3\textwidth]{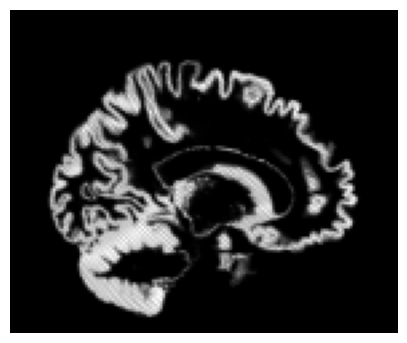}
    \includegraphics[width=0.3\textwidth]{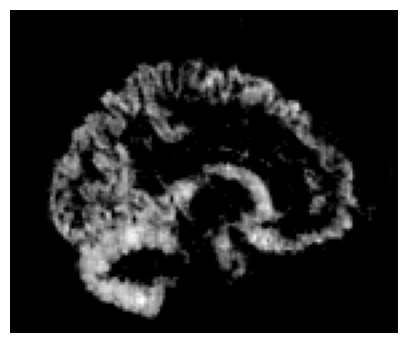}
    \includegraphics[width=0.3\textwidth]{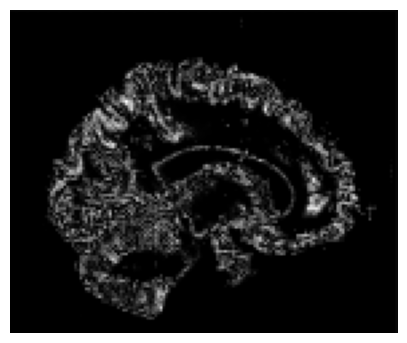}

    \includegraphics[width=0.3\textwidth]{figs/adni/adni_supplement_side/pos_000_02_0_real_0.png}
    \includegraphics[width=0.3\textwidth]{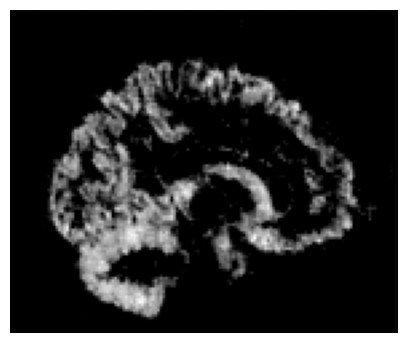}
    \includegraphics[width=0.3\textwidth]{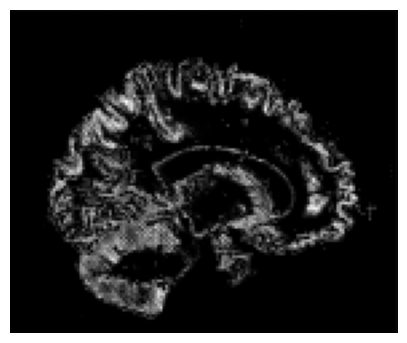}
    
    \caption{\textbf{Top:} Our method, \textbf{Bottom:} BNN. \textbf{From left:} Truth, prediction, difference between truth and prediction. According to difference (3 column), our method performs better than BNN.}
    \label{fig:supp_adni_side}
\end{figure*}
\begin{figure*}[!ht]
    \centering
    \includegraphics[width=0.3\textwidth]{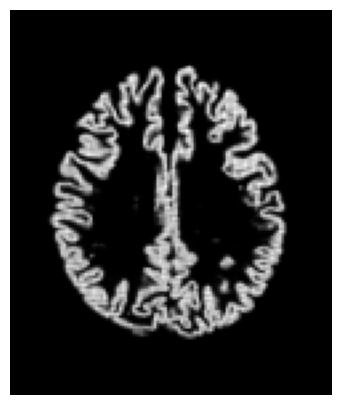}
    \includegraphics[width=0.3\textwidth]{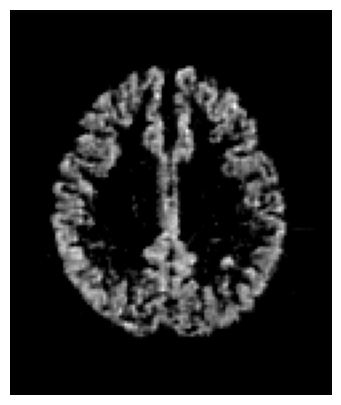}
    \includegraphics[width=0.3\textwidth]{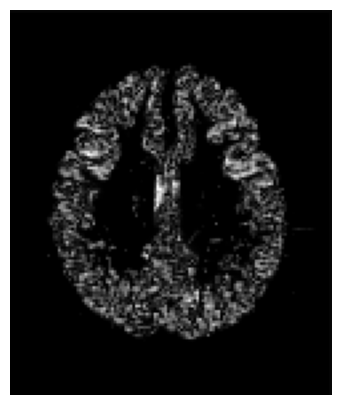}

    \includegraphics[width=0.3\textwidth]{figs/adni/adni_supplement_top/pos_000_02_0_real_0.png}
    \includegraphics[width=0.3\textwidth]{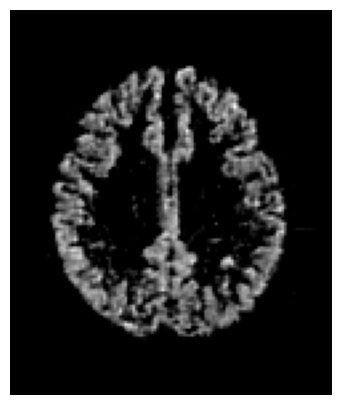}
    \includegraphics[width=0.3\textwidth]{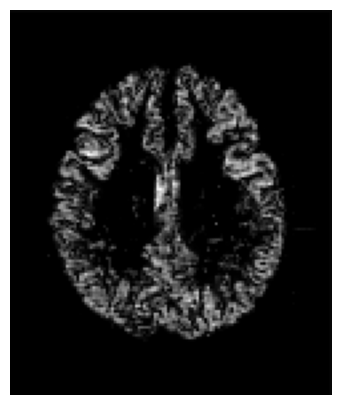}
    
      \caption{\textbf{Top:} Our method, \textbf{Bottom:} BNN. \textbf{From left:} Truth, prediction, difference between truth and prediction. According to difference (3 column), our method performs better than BNN.}
      \label{fig:supp_adni_top}
\end{figure*}

\subsection{Hardware specifications and architecture of networks}
All experiments were executed on NVIDIA - 2080ti, and detailed code will be provided in github repository later.

For Rotating MNIST (2d data) encoder/decoder is described in Figure \ref{fig:2d_encoder} and for ADNI (3d data) encoder is described in Figure \ref{fig:3d_encoder} and decoder in Figure \ref{fig:3d_decoder}.

\begin{figure*}
\centering
\begin{lstlisting}[language=Python, 
        showstringspaces=false,
        formfeed=newpage,
        tabsize=4,
        commentstyle=\itshape,
        basicstyle=\footnotesize\ttfamily,
        morekeywords={sampler_normal, torch},
        frame=lines,
        title={Encoder structure used in ROTATING MNIST},
        label={}
        ]
        
encoder = nn.Sequential(
        nn.Conv2d(input_dim, 12, ks,
                  stride=1, padding=1),
        nn.ReLU(),
        nn.Conv2d(12, 24, ks,
                  stride=2, padding=1), 
        nn.ReLU(),
        nn.Conv2d(24, output_dim, ks,
                  stride=2, padding=1),
        nn.Flatten(2),
        nn.Linear(49, 1), 
        nn.Flatten(1)  
        )
\end{lstlisting}
\begin{lstlisting}[language=Python, 
        showstringspaces=false,
        formfeed=newpage,
        tabsize=4,
        commentstyle=\itshape,
        basicstyle=\footnotesize\ttfamily,
        morekeywords={sampler_normal, torch},
        frame=lines,
        title={Decoder structure used in ROTATING MNIST},
        label={}
        ]      
extend_to_2d = nn.Linear(input_dim, 
                         49 * input_dim)
decoder = nn.Sequential(
        nn.ConvTranspose2d(input_dim,
                           24,
                           ks,
                           stride=2,
                           padding=1,
                           output_padding=1),
        nn.ConvTranspose2d(24,
                           12,
                           ks,
                           stride=2,
                           padding=1,
                           output_padding=1), 
        nn.ConvTranspose2d(12, output_dim, ks, 
                           stride=1, padding=1),
        nn.Sigmoid(),
        )
\end{lstlisting}
\caption{Description of Encoder and Decoder used in experiment with 2d data structure: Rotating MNIST.}
\label{fig:2d_encoder}
\end{figure*}

\begin{figure*}
\begin{lstlisting}[language=Python, 
        showstringspaces=false,
        formfeed=newpage,
        tabsize=4,
        commentstyle=\itshape,
        basicstyle=\footnotesize\ttfamily,
        morekeywords={sampler_normal, torch},
        frame=lines,
        title={Encoder structure used in ADNI},
        label={}
        ]
encoder = nn.Sequential(
        nn.Conv3d(input_dim,
                  8,
                  kernel_size=3,
                  stride=1,
                  padding=1,
                  bias=False),
        nn.ReLU(),
        nn.MaxPool3d(kernel_size=2, return_indices=True),
        nn.Conv3d(8,
                  16,
                  kernel_size=3,
                  stride=1,
                  padding=2,
                  bias=False),
        nn.ReLU(),
        nn.MaxPool3d(kernel_size=2, return_indices=True),
        nn.Conv3d(16,
                  32,
                  kernel_size=3,
                  stride=1,
                  padding=1,
                  bias=False),
        nn.ReLU(),
        nn.MaxPool3d(kernel_size=2, return_indices=True),
        nn.Conv3d(32,
                  64,
                  kernel_size=3,
                  stride=1,
                  padding=2,
                  bias=False),
        nn.ReLU(),
        nn.MaxPool3d(kernel_size=2, return_indices=True),
        nn.Conv3d(64,
                  128,
                  kernel_size=3,
                  stride=1,
                  padding=1,
                  bias=False),
        nn.ReLU(),
        nn.MaxPool3d(kernel_size=2, return_indices=True),
        nn.Conv3d(128,
                  256,
                  kernel_size=3,
                  stride=1,
                  padding=1,
                  bias=False),
        nn.ReLU(),
        nn.MaxPool3d(kernel_size=2, return_indices=True),
        nn.Conv3d(256,
                  output_dim,
                  kernel_size=3,
                  stride=1,
                  padding=1,
                  bias=False),
        nn.ReLU(),
        nn.Flatten(2),
        nn.Linear(8, 1),
        nn.Flatten(1) 
        )
\end{lstlisting}
\caption{Encoder for ADNI}
\label{fig:3d_encoder}
\end{figure*}

\begin{figure*}

\begin{lstlisting}[language=Python, 
        showstringspaces=false,
        formfeed=newpage,
        tabsize=4,
        commentstyle=\itshape,
        basicstyle=\footnotesize\ttfamily,
        morekeywords={sampler_normal, torch},
        frame=lines,
        title={Decoder structure used in ADNI},
        label={}
        ]
extend_to_3d = nn.Linear(input_dim, 2 * 2 * 2 * input_dim)
decoder = nn.Sequential(
        nn.ConvTranspose3d(in_channels=input_dim,
                           out_channels=256,
                           kernel_size=3,
                           padding=1),
        nn.ReLU(),
        nn.MaxUnpool3d(kernel_size=2),
        nn.ConvTranspose3d(in_channels=256,
                           out_channels=128,
                           kernel_size=3,
                           padding=1),
        nn.ReLU(),
        nn.MaxUnpool3d(kernel_size=2),
        nn.ConvTranspose3d(in_channels=128,
                           out_channels=64,
                           kernel_size=3,
                           padding=1),
        nn.ReLU(),
        nn.MaxUnpool3d(kernel_size=2),
        nn.ConvTranspose3d(in_channels=64,
                           out_channels=32,
                           kernel_size=3,
                           padding=2),
        nn.ReLU(),
        nn.MaxUnpool3d(kernel_size=2),
        nn.ConvTranspose3d(in_channels=32,
                           out_channels=16,
                           kernel_size=3,
                           padding=1),
        nn.ReLU(),
        nn.MaxUnpool3d(kernel_size=2),
        nn.ConvTranspose3d(in_channels=16,
                           out_channels=8,
                           kernel_size=3,
                           padding=2),
        nn.ReLU(),
        nn.MaxUnpool3d(kernel_size=2),
        nn.ConvTranspose3d(in_channels=8,
                           out_channels=output_dim,
                           kernel_size=3,
                           padding=1),
        nn.ReLU()
       )
        
\end{lstlisting}
\caption{Decoder for ADNI}
\label{fig:3d_decoder}
\end{figure*}

\end{document}